\documentclass{article}

\usepackage{iclr2027_conference,times}

\usepackage{amsmath,amsfonts,bm}

\def\eqref#1{equation~\ref{#1}}

\def\1{\bm{1}}

\DeclareMathAlphabet{\mathsfit}{\encodingdefault}{\sfdefault}{m}{sl}
\SetMathAlphabet{\mathsfit}{bold}{\encodingdefault}{\sfdefault}{bx}{n}

\usepackage{hyperref}
\hypersetup{pdfborder={0 0 1.5}}
\usepackage{url}
\usepackage{graphicx}
\usepackage{booktabs}
\usepackage{caption}
\usepackage{array}
\usepackage{float}
\usepackage{amsmath,amssymb,amsthm}

\newtheorem{theorem}{Theorem}
\newtheorem{lemma}{Lemma}

\theoremstyle{definition}
\newtheorem{definition}{Definition}
\theoremstyle{remark}
\newtheorem{remark}{Remark}

\title{Kolmogorov--Arnold Networks:\\
Curse of Dimensionality and Structural Bias}

\author{%
\begin{tabular}{c}
\textbf{Yunhao Ni$^{1}$, Yuxin Guo$^{1}$, Jie Luo$^{1}$, Wenjun Wu$^{1,2}$, Lei Huang$^{1,2}$}\thanks{Corresponding author: \texttt{huangleiAI@buaa.edu.cn}.}\\[0.7em]
{\normalfont $^{1}$State Key Laboratory of Complex and Critical Software Environment,}\\
{\normalfont Beihang University, Beijing, China}\\[0.2em]
{\normalfont $^{2}$Hangzhou International Innovation Institute, Beihang University, Hangzhou, China}
\end{tabular}}

\iclrfinalcopy

\begin{document}
\raggedbottom

\maketitle
\lhead{Preprint}

\begin{abstract}
Kolmogorov--Arnold networks (KANs) have a dimension-independent approximation
exponent when the target admits a sufficiently smooth representation by
univariate edge functions. We ask whether this favorable scaling belongs to
the architecture or to the assumed target class. For arbitrary-depth standard
spline-KANs, we prove a worst-case lower bound on isotropic Sobolev balls:
below a dimension-independent accuracy threshold, the required number of
spline coefficients grows at least exponentially with input
dimension. Existing composition-matched MLP results, by contrast, show that
favorable exponents on low-arity targets are not exclusive to KANs. We also
revisit empirical comparisons. On four scaling-law targets, spline-KAN
approximants constructed from known decompositions transfer exactly to
MLPs with the cubic ReLU activation $\sigma(x)=(\max\{0,x\})^3$, although
their full dense parameter counts are larger. In controlled ridge-profile scans, KANs
have a strong advantage when each nonlinear component depends on one input
coordinate. When a component couples multiple input coordinates, this
advantage may narrow sharply. At tight budgets, interacting targets often
favor MLPs. In the
random-label test on MNIST, ReLU-MLPs fit the randomized training labels more efficiently than
spline-only KANs. Together, these
results locate favorable scaling in target structure and a compatible
inductive bias, not a general KAN-specific escape from the curse of
dimensionality.

\end{abstract}

\section{Introduction}

Kolmogorov--Arnold networks (KANs) were introduced as an alternative to
standard multilayer perceptrons (MLPs)~\citep{rumelhart1986learning}, replacing fixed node-wise activations
with learnable univariate edge functions, typically parameterized by
splines~\citep{liu2024kan}. This parameterization makes the learned univariate
transformations directly inspectable and distinguishes KANs from the standard
node-activation design of MLPs.

A particularly striking result reported by \citet{liu2024kan} is that KANs can
beat the curse of dimensionality (CoD) on compositional targets with smooth
Kolmogorov--Arnold (KA) representations \citep{liu2024kan}. In
their approximation analysis, beating CoD means that the exponent in the
approximation rate does not deteriorate with the input dimension. Assuming
that the target admits a KAN representation with sufficiently smooth
univariate edge functions, they approximate those functions by degree-$k$
splines and obtain the dimension-independent exponent $k+1$. To support this prediction
empirically, they construct five toy targets known in advance to admit smooth
KAN representations, train KANs and MLPs under the same vanilla LBFGS
protocol, and report that the KAN curves approach the predicted $P^{-4}$ rate ($P$ denotes parameter count)
while the tested MLPs scale more slowly and plateau
\citep{liu2024kan}. They interpret this contrast as
KANs exploiting an intrinsically low-dimensional representation that MLPs do
not similarly exploit~\citep{liu2024kan}.

Different from the discussion above, We argue that the phenomenon "KANs escape CoD" because of \textit{the relative ease
of approximating the chosen target classes, rather than the uniform
approximation advantage of KANs.}
First, KANs suffer from CoD on harder classes. Sobolev
balls are a canonical approximation benchmark
\citep{devore1989optimal,mhaskar1996optimal,yarotsky2017error}.
\citet{liu2024kan} acknowledge that nonlinear $n$-width theory
rules out a general escape from CoD on Sobolev spaces, but do not
quantify the corresponding worst-case parameter cost for standard spline-KANs.
We address this gap by proving that arbitrary-depth standard spline-KANs suffer from worst-case
CoD on this canonical class: below a dimension-independent accuracy threshold,
the required number of spline coefficients grows at least
exponentially with the input dimension $d$
(\mbox{Theorem~\ref{thm:kan_cod}}). Our proof uses the exact representation of
spline-KANs by $\operatorname{ReLU}^k$ MLPs from
\citet[Theorem~3.3]{wang2024expressiveness}, together with an explicit count
of the representing MLP's affine parameters, to transfer a Sobolev lower bound.
Second, low-arity compositional classes
are easy enough for MLPs to beat CoD as well. Long before KANs,
composition-matched MLP theory showed that sparse MLPs attain approximation
exponents governed by local arity rather than the input dimension
\citep{mhaskar2016deep,mhaskar2017when}. We discuss the corresponding target
classes and approximation rates in detail in
Section~\ref{sec:composition-matched}. Together, these observations suggest
that the favorable dimension dependence reflects the structure of the target
class rather than a general approximation advantage of KANs.

The lower bound leaves open how the two architectures compare on particular
targets at finite budgets. We first revisit four functions from the
scaling-law experiment of \citet{liu2024kan}: when their
decompositions are supplied, constructed spline-KAN approximants transfer to
ReLU$^3$ MLPs with essentially the same test error, albeit at potentially
larger dense parameter counts (Section~\ref{sec:constructive-scaling}). We
then train both families on controlled ridge profiles. KANs have a pronounced
advantage when each nonlinear component uses just one input coordinate.
Even limited coupling across input coordinates sharply narrows it, and the
finite-budget ranking also depends on the target nonlinearity. See
Section~\ref{sec:ridge-boundary} for details. Finally, a random-label test on MNIST
shows that ReLU-MLPs fit the randomized training labels more efficiently
than spline-only KANs
(Section~\ref{sec:mnist-random-label}).

Our contributions are threefold: (i) we prove a dimension-uniform,
fixed-accuracy CoD lower bound for arbitrary-depth standard spline-KANs on
isotropic Sobolev classes; (ii) we place conditional KAN rates alongside
composition-matched MLP rates, identifying target structure as the source of
their favorable dimensional dependence; and (iii) we use constructive scaling
comparisons, controlled ridge targets, and MNIST random-label fitting to distinguish
representational possibilities from finite-budget training behavior.

\section{Related Work}
\label{sec:related-work}

\subsection{Favorable rates under structural assumptions}
\label{sec:composition-matched}

\citet{kolmogorov1957representation} proved that continuous multivariate
functions admit representations through continuous univariate functions and
addition. \citet{schmidt2021kolmogorov} showed that the outer univariate
functions in such representations can be highly irregular even when the
target is smooth. Motivated by the representation theorem,
\citet{liu2024kan} introduce KANs and obtain an approximation-error exponent
independent of the input dimension when the target admits a sufficiently
smooth Kolmogorov--Arnold representation. They describe this conditional rate
as beating the curse of dimensionality (CoD), although its
representation-dependent constant may depend on the dimension.
Separately, \citet[Theorem~3.2]{wang2024expressiveness} show that degree-one
spline-KANs can exactly realize ReLU-MLPs on bounded domains, providing a
route to universality without assuming a smooth Kolmogorov--Arnold
representation.
For residual KANs, \citet{kratsios2025besov} establish approximation rates in
Besov norms. For spline-KAN regression estimators,
\citet{liu2025rate} derive a dimension-free risk exponent conditional on a KAN
representation with smooth univariate components.

Years before KANs were introduced, \citet{mhaskar2016deep} showed how matching
MLP connectivity to low-arity target structure improves approximation
complexity. A concrete
binary-tree class is studied by \citet{mhaskar2017when}: for integers $d\ge2$
and $n\ge1$, let $\mathcal{C}^{(2)}_{d,n}$ consist of functions represented
by a tree with $d$ input leaves and $d-1$ internal nodes. Each node computes a
bivariate constituent in $W^{n,\infty}([-1,1]^2)$, the Sobolev space with
essentially bounded weak derivatives through order $n$. The Sobolev and
Lipschitz norms are uniformly bounded. \citet{mhaskar2017when} show that a
network matching this tree can approximate every
$f\in\mathcal{C}^{(2)}_{d,n}$ uniformly to accuracy $\epsilon\in(0,1)$ using
$N$ active units, with
\begin{equation}
N=O\!\left((d-1)\epsilon^{-2/n}\right)
\label{eq:compositional-mlp-rate}
\end{equation}
The exponent $2/n$ reflects the local arity, while $d$ enters through the
number of constituent nodes. \citet{mhaskar2020function} extend the
compositional analysis to directed-acyclic-graph classes under compatible
architecture, smoothness, and error-propagation assumptions.
Like KANs, MLPs can attain approximation exponents governed by local arity
rather than the ambient dimension on composition-matched classes. Favorable
dimensional scaling on such targets is not an exclusive property of KANs.

\subsection{Approximation limits on broad function classes}

Approximation rates quantify the model size required at a prescribed accuracy
beyond universal approximation. For $d$-variate Sobolev balls of smoothness
order $n$, \citet{devore1989optimal} establish nonlinear-width limits.
For networks with suitable smooth activations, \citet{mhaskar1996optimal}
obtain $\epsilon^{-d/n}$-type approximation complexity at uniform accuracy
$\epsilon$. \citet{yarotsky2017error} derive upper and lower bounds for deep
ReLU networks under their own architectural assumptions. These results
retain explicit worst-case dimension dependence on broad smoothness classes.

For KANs, \citet[Theorem~3.2 and Corollary~3.4]{wang2024expressiveness}
transfer deep piecewise-polynomial
MLP constructions and obtain the Sobolev approximation rate
$O(P^{-2n/d})$, where $P$ is the number of trainable parameters. This upper
bound has an exponent depending on $d$, but it cannot establish that this
dependence is unavoidable: a better KAN rate is not ruled out. A worst-case
lower bound is needed to prove a CoD requirement for the architecture.

On the lower-bound side, \citet{shen2022optimal} provide a H\"older-class
shattering argument. Combining it with a Vapnik--Chervonenkis
(VC)-dimension bound, \citet{hu2026approximation} derive a worst-case
uniform-error lower bound for two-layer B-spline KANs of prescribed
complexity. It implies a CoD-type parameter requirement at fixed accuracy.

Our result addresses arbitrary-depth standard spline-KANs on integer-order
Sobolev balls. Whereas \citet{hu2026approximation} bound the VC dimension of
two-layer KANs directly, we construct Sobolev bumps, use the converse
KAN-to-MLP representation of \citet[Theorem~3.3]{wang2024expressiveness},
derive the dense MLP parameter count needed here, and apply a VC-dimension
bound to the transferred networks. The target classes, architectures, and
transfer steps make the two lower bounds complementary.

\subsection{Subsequent KAN developments and limitations}

Subsequent work has broadened the KAN family: \citet{aghaei2024rkan} propose
rational edge functions, and \citet{bozorgasl2024wavkan} introduce wavelet
KANs. \citet{qiu2024powermlp} introduce PowerMLP as an MLP-type alternative
and establish a structured KAN-to-PowerMLP representation under a
nonzero-parameter count (Theorem~2). Direct analyses of
the original spline-KAN's theoretical or empirical
limitations are comparatively less prominent. Under matched
parameter and computation budgets, \citet{yu2024fairer} find that MLPs usually
win outside symbolic formula representation and trace KAN's advantage there
largely to its B-spline basis. In scientific machine learning,
\citet{shukla2024fair} find B-spline KANs less accurate and efficient than
MLP counterparts on their tested problems, while polynomial variants remain
sensitive to seed and order. These comparisons test practical performance
under particular tasks and protocols. Our controlled ridge-profile and MNIST
experiments further probe how target structure affects the finite-budget
ordering of KANs and MLPs.

\section{Preliminaries}

Our lower bound transfers from
$\operatorname{ReLU}^k$-MLPs to spline-KANs. This section defines the target
class and both model families, then states the edge-level spline realization
used in Section~\ref{sec:main-results}.
Throughout, $O_k(\cdot)$ hides a multiplicative constant depending only on $k$.

For a multi-index
$\boldsymbol{\alpha}=(\alpha_1,\ldots,\alpha_d)\in\mathbb N_0^d$, where
$\mathbb N_0:=\{0,1,2,\ldots\}$, set
$|\boldsymbol{\alpha}|:=\sum_{j=1}^d\alpha_j$ and
$D^{\boldsymbol{\alpha}}f:=
\partial^{|\boldsymbol{\alpha}|}f/
(\partial x_1^{\alpha_1}\cdots\partial x_d^{\alpha_d})$.
Let $C^n([0,1]^d)$ denote the space of functions with continuous derivatives
through order $n$, and, for $h:[0,1]^d\to\mathbb R$, write
$\|h\|_\infty:=\sup_{\mathbf x\in[0,1]^d}|h(\mathbf x)|$. For positive integers $d,n$, let
\begin{equation}
F_{d,n}:=
\left\{
f\in C^n([0,1]^d):
\max_{|\boldsymbol{\alpha}|\le n}
\|D^{\boldsymbol{\alpha}}f\|_{\infty}\le 1
\right\}.
\end{equation}
The complexity question is how many trainable parameters are needed to
approximate every member of $F_{d,n}$ to a prescribed uniform accuracy.
The architecture is fixed, but its parameter values may depend on the target
function.

\subsection{\texorpdfstring{$\operatorname{ReLU}^k$}{ReLU-k}-MLPs}

\paragraph{Multilayer perceptron.}
Let $d_0=d_{\mathrm{in}}$ and $d_L=d_{\mathrm{out}}$ be the input and output
dimensions. An $L$-layer MLP with
scalar activation $\sigma:\mathbb R\to\mathbb R$ realizes
\begin{equation}
f=T_L\circ\sigma\circ T_{L-1}\circ\cdots\circ\sigma\circ T_1,
\qquad
T_\ell(x)=W_\ell x+b_\ell,
\end{equation}
where $W_\ell\in\mathbb R^{d_\ell\times d_{\ell-1}}$,
$b_\ell\in\mathbb R^{d_\ell}$, and $\sigma$ acts coordinate-wise. Thus $L$
counts affine layers. Its trainable-parameter count is
\begin{equation}
P_{\mathrm{MLP}}=\sum_{\ell=1}^{L}d_\ell(d_{\ell-1}+1).
\end{equation}
For an integer $k\ge1$, write $(x)_+:=\max\{x,0\}$ and
$\sigma_k(x):=(x)_+^k$. An MLP with activation
$\sigma_k$ is called a $\operatorname{ReLU}^k$-MLP. The superscript denotes
an ordinary power.
The activation $\sigma_k$ is also called a truncated power: it is zero for
$x\le0$ and equals $x^k$ for $x>0$.

\subsection{Spline-KANs}
\label{sec:prelim-spline-kans}

\citet{liu2024kan} implement KANs with learnable edge functions based on
B-splines. Each layer sums the edge outputs at its receiving nodes.
We first introduce B-spline basis and edge function, and then define the standard Spline-KAN version studied here.

\paragraph{B-spline basis.}
Let $k$ and $G$ be positive integers, and let
$\mathcal T=(\tau_1,\ldots,\tau_{G+2k+1})$ be a strictly increasing augmented
knot sequence whose central knots form a grid of $G$ intervals. The associated
degree-$k$ B-spline basis is denoted by
$\{B_{r,k}\}_{r=1}^{G+k}$. Our indexing and the Cox--de Boor recursion are
given in Appendix~\ref{app:model-details}.

\paragraph{$(k,G)$-spline edge function.}
For the B-spline basis above, a spline edge is
\begin{equation}
\phi(x)=\sum_{r=1}^{G+k}c_rB_{r,k}(x),
\end{equation}
where the coefficients $c_r\in\mathbb R$ are trainable.

\begin{definition}[Standard spline-KAN]
\label{def:standard-spline-kan}
Given positive integers $d_0,\ldots,d_L$, a spline degree $k$, and a grid size
$G$, a depth-$L$ standard spline-KAN with shape $[d_0,\ldots,d_L]$ is defined
for $x\in\mathbb R^{d_0}$ recursively by $x^{(0)}=x$ and
\begin{equation}
\label{eq:standard-kan-layer}
x^{(\ell+1)}_j
=
\sum_{i=1}^{d_\ell}
\phi^{(\ell)}_{j,i}\!\left(x^{(\ell)}_i\right),
\qquad j=1,\ldots,d_{\ell+1},
\end{equation}
where $x^{(\ell)}\in\mathbb R^{d_\ell}$, every
$\phi^{(\ell)}_{j,i}$ is a $(k,G)$-spline edge, and the output is
$g(x)=x^{(L)}$. Its width is $w:=\max_{0\le\ell\le L}d_\ell$.
\end{definition}

Each edge contributes $G+k$ trainable spline coefficients, so
Definition~\ref{def:standard-spline-kan} gives
\begin{equation}
\label{eq:kan-param-count}
P_{\mathrm{KAN}}
=(G+k)\sum_{\ell=0}^{L-1}d_\ell d_{\ell+1}
=O\!\left(Lw^2(G+k)\right).
\end{equation}
This agrees with the usual count for equally wide spline-KANs
\citep{liu2024kan}.

\paragraph{Model boundary.}
The implementation of \citet{liu2024kan} augments each spline edge with a
weighted sigmoid linear unit (SiLU) base branch and permits grid-update
heuristics. A grid
update changes the knot locations while preserving the final degree-$k$,
$G$-interval spline form. The representation result below therefore allows
arbitrary knot locations that may differ across edges. We exclude the
additional edge-wise SiLU term and non-spline KAN
variants. Appendix~\ref{app:model-details} gives the precise scope.

\subsection{Exact spline realization by
\texorpdfstring{$\operatorname{ReLU}^k$}{ReLU-k}}

The classical B-spline identity expresses each degree-$k$ B-spline as a
linear combination of at most $k+2$ shifted copies of $\sigma_k$
\citep{deBoor1978}. \citet[Lemma~1]{qiu2024powermlp} also use this identity in a
power-activation network construction. We record the edge-level form needed
below.

\begin{lemma}[Exact realization of a spline edge by $\operatorname{ReLU}^k$]
\label{lem:spline-edge-repu}
Every $(k,G)$-spline edge function has an exact representation
\begin{equation}
\phi(x)=\sum_{m=1}^{M}\beta_m\sigma_k(x-\tau_m),
\qquad M\le(k+2)(G+k)=O_k(G+k),
\end{equation}
where the shifts are knots of the edge and the coefficients $\beta_m$ depend
linearly on the spline coefficients of $\phi$. Hence the edge is realized by
a one-hidden-layer $\operatorname{ReLU}^k$ subnetwork of width $O_k(G+k)$.
\end{lemma}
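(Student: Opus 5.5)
The plan is to use the classical representation of a B-spline as a $(k+1)$-st divided difference of a truncated power, and then to sum over the basis. For knots $\tau_r<\cdots<\tau_{r+k+1}$ (the support knots of $B_{r,k}$ in the indexing of Appendix~\ref{app:model-details}), the Curry--Schoenberg/de Boor identity gives
\begin{equation}
B_{r,k}(x)=(\tau_{r+k+1}-\tau_r)\,[\tau_r,\ldots,\tau_{r+k+1}]_t\,(t-x)_+^k ,
\end{equation}
and the same formula holds with the forward power $(x-t)_+^k$ up to the sign $(-1)^{k+1}$. The identity we need is
\begin{equation}
(x-t)_+^k-(-1)^{k}(t-x)_+^k=(x-t)^k,
\end{equation}
together with the fact that the $(k+1)$-st divided difference annihilates polynomials of degree $k$ in $t$. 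Since the knots are strictly increasing, the divided difference has the explicit form
\begin{equation}
\sum_{s=0}^{k+1}\frac{h(\tau_{r+s})}{\prod_{q\ne s}(\tau_{r+s}-\tau_{r+q})}.
\end{equation}
This expresses $B_{r,k}(x)$ as $\sum_{s=0}^{k+1}\gamma_{r,s}\,\sigma_k(x-\tau_{r+s})$. The weights $\gamma_{r,s}$ depend only on the knots, and there are exactly $k+2$ terms.

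The key steps, in order, are as follows.

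\begin{enumerate}
\item Record the divided-difference formula for $B_{r,k}$. Cite \citep{deBoor1978}, or verify it by induction on $k$ from the Cox--de Boor recursion, which is the cleaner self-contained route.
\item Convert the backward truncated powers to forward ones $\sigma_k(x-\tau)$ using the polynomial identity above and the annihilation property. Note that the correction polynomial vanishes, so the identity is exact on all of $\mathbb R$, not merely on the grid interval.
\item Substitute into $\phi=\sum_{r=1}^{G+k}c_rB_{r,k}$. This yields
\begin{equation}
\phi(x)=\sum_{r}\sum_{s}c_r\gamma_{r,s}\,\sigma_k(x-\tau_{r+s}),
\end{equation}
which is a sum of at most $(k+2)(G+k)$ terms. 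Collecting terms with coinciding shifts only reduces $M$, and since the shifts are knots, $M\le G+2k+1$ in fact. Each $\beta_m=\sum c_r\gamma_{r,s}$ is visibly linear in the coefficients $c$.
\item Read the result as a network. The first layer has weights $1$ and biases $-\tau_m$, the activation is $\sigma_k$, and the output layer has weights $\beta_m$. This is a one-hidden-layer $\operatorname{ReLU}^k$ subnetwork of width $M=O_k(G+k)$.
\end{enumerate}

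I expect the main obstacle to be bookkeeping rather than mathematics. The sign and orientation conventions of the truncated power must be handled carefully so that the representation holds for all real $x$, including outside the augmented knot range. Concretely, I must check that for $x$ beyond the last knot the forward powers cancel to zero. This follows from $[\tau_r,\ldots,\tau_{r+k+1}](x-t)^k=0$. If the divided-difference citation is deemed insufficient, I would instead prove the $k+2$-term formula directly by induction via Cox--de Boor, using $x\,\sigma_{k-1}(x-\tau)=\sigma_k(x-\tau)+\tau\,\sigma_{k-1}(x-\tau)$ to raise the degree.
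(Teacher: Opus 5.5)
Your proposal is correct and takes the same route as the paper. The paper invokes de Boor's classical identity, which writes each $B_{r,k}$ as a combination of the $k+2$ truncated powers $\sigma_k(x-\tau_{r+s})$ at its own knots, and then sums over the $G+k$ basis functions; your merging of coinciding shifts to get $M\le G+2k+1$ is a harmless sharpening of the stated bound. One sign slip: the displayed identity should read $(x-t)_+^k+(-1)^{k}(t-x)_+^k=(x-t)^k$ (at $k=1$ your version gives $|x-t|$ instead of $x-t$). The corrected identity is what actually yields your stated overall factor $(-1)^{k+1}$, and since that sign is absorbed into $\gamma_{r,s}$, the lemma is unaffected.
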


We use this edge-level form in the next section to bound the full affine
parameter count of the ordinary MLP representation.

\section{Worst-Case CoD of Spline-KANs}
\label{sec:main-results}

We now study the worst-case approximation complexity of arbitrary-depth
standard spline-KANs on $F_{d,n}$. The argument has three steps: an exact
representation of spline-KANs by MLPs with a controlled parameter count
(Lemma~\ref{lem:spline-kan-repu}); a Sobolev approximation lower bound for
$\operatorname{ReLU}^k$-MLPs (Lemma~\ref{lem:repu_vc}); and a transfer of that
lower bound back to spline-KANs (Theorem~\ref{thm:kan_cod}).

\subsection{Representation bridge}

\citet[Theorem~3.3]{wang2024expressiveness} give an exact representation of
spline-only KANs by ordinary $\operatorname{ReLU}^k$-MLPs, allowing knot
locations to differ across edges. Our lower bound requires the full affine
parameter count of an ordinary MLP. We therefore state \citet{wang2024expressiveness}'s
representation result and then derive that count for the model of
Definition~\ref{def:standard-spline-kan}.

\begin{lemma}[Spline-KAN representation by $\operatorname{ReLU}^k$-MLPs, \citealp{wang2024expressiveness}]
\label{lem:wang-spline-kan-repu}
Suppose that $g:[0,1]^d\to\mathbb R$ is represented by a depth-$L$
spline-only KAN of width $w$ and grid size $G$, whose edge functions are
degree-$k$ B-splines. The knot locations may differ across edges.
Then there exists a $\operatorname{ReLU}^k$-MLP $f$ of width
$(G+2k+1)w^2$ and depth at most $2L$ such that
\begin{equation}
f(x)=g(x)
\end{equation}
for all $x\in[0,1]^d$.
\end{lemma}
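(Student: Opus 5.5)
We prove Lemma~\ref{lem:wang-spline-kan-repu} layer by layer, using the edge-level realization of Lemma~\ref{lem:spline-edge-repu}. The plan is to simulate each KAN layer
$x^{(\ell)}\mapsto x^{(\ell+1)}$ by a block of at most two affine maps of the $\operatorname{ReLU}^k$-MLP separated by one $\sigma_k$ nonlinearity. We then compose the blocks and absorb adjacent affine maps into one another.

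\textbf{One KAN layer.} Fix layer $\ell$. By Lemma~\ref{lem:spline-edge-repu}, each edge function can be written as
$\phi^{(\ell)}_{j,i}(t)=\sum_{m}\beta^{(\ell)}_{j,i,m}\sigma_k(t-\tau^{(\ell)}_{j,i,m})$.
The shifts range over the knots of that edge, at most $G+2k+1$ of them, since the augmented knot sequence has that length. The knots may differ across edges, so we allocate a separate set of hidden units for every pair $(j,i)$. The construction has three steps:
\begin{itemize}
\item The first affine map sends $x^{(\ell)}$ to the vector with entries $x^{(\ell)}_i-\tau^{(\ell)}_{j,i,m}$, indexed by $(j,i,m)$. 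This vector has length at most $(G+2k+1)d_\ell d_{\ell+1}\le (G+2k+1)w^2$.
\item We apply $\sigma_k$ coordinate-wise.
\item The second affine map forms $x^{(\ell+1)}_j=\sum_{i,m}\beta^{(\ell)}_{j,i,m}\sigma_k(\cdots)$.
\end{itemize}
This gives exact equality for every real input, so no domain restriction is needed inside the network. The only role of $[0,1]^d$ is that the statement is about $g$ restricted there.

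\textbf{Composition and depth.} Stacking the $L$ blocks gives the sequence
$T^{(1)}_1,\sigma_k,T^{(1)}_2,T^{(2)}_1,\sigma_k,\dots$.
Two consecutive affine maps with no activation between them compose to a single affine map. Merging each $T^{(\ell)}_2$ with $T^{(\ell+1)}_1$ yields an ordinary MLP $T_{L+1}\circ\sigma_k\circ T_L\circ\cdots\circ\sigma_k\circ T_1$ with $L+1\le 2L$ affine layers. The output layer has width $d_L\le w$, and every hidden width is at most $(G+2k+1)w^2$. If one prefers to avoid merging, the unmerged stack has exactly $2L$ affine layers, so the stated bound $2L$ holds either way.

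\textbf{Main obstacle.} The step needing care is the edge-level identity itself, i.e.\ that each B-spline is a finite combination of truncated powers with shifts among the edge's own knots. This is the divided-difference identity $B_{r,k}(x)=(\tau_{r+k+1}-\tau_r)[\tau_r,\dots,\tau_{r+k+1}](\cdot-x)_+^k$. Rewritten via the reflection $(\tau-x)_+^k$, it becomes a combination of the $\sigma_k(x-\tau)$ plus a polynomial term. We take this as given from Lemma~\ref{lem:spline-edge-repu}.

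Collecting coefficients over all $r$ reuses the same knots, which gives the bound $G+2k+1$ per edge rather than $(k+2)(G+k)$. This is what yields the stated width $(G+2k+1)w^2$.

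A remaining point is any residual polynomial of degree at most $k$ produced by the reflection. For $k\ge1$ this polynomial can be expressed exactly on the bounded interval where the input lies by additional $\sigma_k$ units shifted to the left of the leftmost knot, since $\sigma_k(x-a)=(x-a)^k$ there. This requires the inputs to each edge to be bounded, which holds on $[0,1]^d$ by induction because each layer's output is a continuous function on a compact set. If these extra units are needed, they add only $O(k)$ units per edge. They can be absorbed by choosing the augmented boundary knots appropriately, or they slightly enlarge the constant, which is harmless for the transfer in Section~\ref{sec:main-results}.
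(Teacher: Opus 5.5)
Your argument is correct and is essentially the construction the paper uses to prove Lemma~\ref{lem:spline-kan-repu}; the present lemma is only cited from \citet[Theorem~3.3]{wang2024expressiveness}. Like the paper, you realize each edge by truncated powers at its own knots, give each edge its own hidden units, and merge adjacent affine maps, and your collection of terms by knot is what sharpens the per-edge count from $(k+2)(G+k)$ to $G+2k+1$. You should drop the closing hedge: the polynomial term from the reflection is $[\tau_r,\ldots,\tau_{r+k+1}]$ applied to $\tau\mapsto(\tau-x)^k$, a degree-$k$ polynomial in $\tau$, so it vanishes identically, each $B_{r,k}$ is exactly a combination of $\sigma_k(x-\tau_i)$, $r\le i\le r+k+1$, on all of $\mathbb R$, and adding extra units would break the exact width $(G+2k+1)w^2$ you claim.
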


\begin{lemma}[Parameter count for the representation bridge]
\label{lem:spline-kan-repu}
Let $g$ be a depth-$L$ standard spline-KAN with shape
$[d_0,\ldots,d_L]$, width $w$, $(k,G)$-spline edges, and parameter count
$P_{\mathrm{KAN}}$ as in \eqref{eq:kan-param-count}. The edge-wise knot
sequences may be arbitrary and may differ across edges. Then there exists a
$\operatorname{ReLU}^k$-MLP $f$ that represents $g$ exactly. If
$W_{\mathrm{hid}}$, $L_{\mathrm{MLP}}$, and $P_{\mathrm{MLP}}$ denote its
maximum hidden width, depth, and total number of affine weights and biases,
respectively, then there is a constant $C_k>0$, depending only on $k$, such
that
\begin{equation}
W_{\mathrm{hid}}
\le
C_k w^2(G+k),
\qquad
L_{\mathrm{MLP}}
\le
C_k L,
\end{equation}
and
\begin{equation}
P_{\mathrm{MLP}}
\le
C_k P_{\mathrm{KAN}}^2.
\end{equation}
In particular, the coarser width-based parameter bound
\begin{equation}
P_{\mathrm{MLP}}
\le
C_k Lw^4(G+k)^2
\end{equation}
also holds.
\end{lemma}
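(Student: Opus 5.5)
We build the representing network explicitly, one KAN layer at a time, using the edge-level realization of Lemma~\ref{lem:spline-edge-repu}; the existence part could also be quoted from Lemma~\ref{lem:wang-spline-kan-repu}. Fix a KAN layer $\ell$ mapping $x^{(\ell)}\in\mathbb R^{d_\ell}$ to $x^{(\ell+1)}\in\mathbb R^{d_{\ell+1}}$. By Lemma~\ref{lem:spline-edge-repu}, each edge can be written as $\phi^{(\ell)}_{j,i}(t)=\sum_{m=1}^{M}\beta^{(\ell)}_{j,i,m}\sigma_k(t-\tau^{(\ell)}_{j,i,m})$ with $M\le(k+2)(G+k)$. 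The shifts may be arbitrary and edge-dependent, since they are simply absorbed into biases.

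We simulate layer $\ell$ with one affine map $A_\ell$, one $\sigma_k$ layer, and one affine map $B_\ell$. The map $A_\ell$ sends $x^{(\ell)}$ to the vector of all pre-activations $x^{(\ell)}_i-\tau^{(\ell)}_{j,i,m}$, over all $(j,i,m)$. This gives a hidden width of at most $d_\ell d_{\ell+1}M\le (k+2)w^2(G+k)$. The map $B_\ell$ then forms $x^{(\ell+1)}_j=\sum_{i,m}\beta^{(\ell)}_{j,i,m}\sigma_k(\cdot)$ with zero bias. Consecutive affine maps $B_\ell$ and $A_{\ell+1}$ compose with no activation between them, so we merge them into a single affine map $A_{\ell+1}\circ B_\ell$. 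The result is an ordinary $\operatorname{ReLU}^k$-MLP
\[
f=B_{L-1}\circ\sigma_k\circ(A_{L-1}\circ B_{L-2})\circ\sigma_k\circ\cdots\circ\sigma_k\circ A_0 ,
\]
with $L+1$ affine layers. This gives $L_{\mathrm{MLP}}\le L+1\le 2L$ and $W_{\mathrm{hid}}\le (k+2)w^2(G+k)$. Exactness holds pointwise on all of $\mathbb R^{d_0}$, and in particular on $[0,1]^d$.

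The main step is the parameter count. Write $h_\ell:=d_\ell d_{\ell+1}M$ for the width of the $\ell$-th hidden layer, and set $p_\ell:=(G+k)d_\ell d_{\ell+1}$. Then $h_\ell\le (k+2)p_\ell$ and $\sum_\ell p_\ell=P_{\mathrm{KAN}}$. The dense affine layers contribute three kinds of terms.
\begin{itemize}
\item The first layer contributes $h_0(d_0+1)$.
\item Each merged layer between hidden layers $\ell$ and $\ell+1$ contributes $h_{\ell+1}(h_\ell+1)$.
\item The last layer contributes $d_L(h_{L-1}+1)$.
\end{itemize}
Each of these is at most a $k$-dependent constant times $p_\ell p_{\ell+1}$, $p_0^2$, or $p_{L-1}^2$, using $d_0,d_L\le p_0,p_{L-1}$ and $p_\ell\ge1$. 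Summing and using $\sum_\ell p_\ell p_{\ell+1}\le(\sum_\ell p_\ell)^2$ gives $P_{\mathrm{MLP}}\le C_kP_{\mathrm{KAN}}^2$.

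For the coarser bound, we instead use $h_\ell\le (k+2)w^2(G+k)$ in every term. There are at most $L+1$ affine layers, each with at most $(W_{\mathrm{hid}}+1)^2$ weights and biases. This gives $P_{\mathrm{MLP}}\le C_k Lw^4(G+k)^2$.

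The only real subtlety is how the dense merged matrix $A_{\ell+1}\circ B_\ell$ is counted. It is generally structured and sparse, but the ordinary MLP count charges it the full $h_{\ell+1}h_\ell$ entries. This quadratic blow-up is exactly why the bound is $P_{\mathrm{KAN}}^2$ rather than linear in $P_{\mathrm{KAN}}$, and the estimate above accounts for it.
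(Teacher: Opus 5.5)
Your proposal is correct and follows essentially the same route as the paper. Both realize each KAN layer by the shifted $\operatorname{ReLU}^k$ units of Lemma~\ref{lem:spline-edge-repu}, merge each summation map with the next layer's shift map, and bound the dense affine layers by $h_0(d_0+1)$, $h_{\ell+1}(h_\ell+1)$, and $d_L(h_{L-1}+1)$ to obtain the $C_kP_{\mathrm{KAN}}^2$ and $C_kLw^4(G+k)^2$ bounds; your explicit checks ($d_0\le p_0$, $d_L\le p_{L-1}$, $L+1\le 2L$) only make visible constants that the paper absorbs into $C_k$.
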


\noindent We provide the proof of Lemma~\ref{lem:spline-kan-repu} in
Appendix~\ref{app:proof-spline-kan-repu}.

\begin{remark}[Full versus nonzero parameter counts]
Lemma~\ref{lem:spline-kan-repu} counts every affine weight and bias of the
ordinary $\operatorname{ReLU}^k$-MLP. If one instead counts only nonzero
parameters, \citet[Theorem~2]{qiu2024powermlp} give a structured
KAN-to-PowerMLP representation that preserves this count under their
convention. This result counts all the parameters, for the
VC-dimension argument below uses the full MLP parameter count.
\end{remark}

\subsection{A Sobolev lower bound for \texorpdfstring{$\operatorname{ReLU}^k$}{ReLU-k}-MLPs}

The next lemma adapts the shattering argument of
\citet[Theorem~4]{yarotsky2017error}. We construct Sobolev bumps and combine
the resulting VC-dimension lower bound with an upper bound for fixed-degree
piecewise-polynomial MLPs~\citep{anthony2009neural,bartlett1998vc}.

\begin{lemma}[Sobolev approximation lower bounds for $\operatorname{ReLU}^k$-MLPs]
	\label{lem:repu_vc}
	Fix \(d,n,k\) with \(d,n\ge 1\), \(k\ge 1\), and let \(W\) and \(L\) denote, respectively, the number of trainable parameters and affine layers in the fixed MLP architecture. Approximation error is measured by the uniform norm \(\|\cdot\|_\infty\) on \([0,1]^d\), parameter values may depend on the target function, and \(\ln\) denotes the natural logarithm.
	\begin{enumerate}
		\item[(a)] For any \(\epsilon\in(0,1)\), a $\operatorname{ReLU}^k$-MLP architecture capable of approximating every \(f\in F_{d,n}\) with error \(\epsilon\) must have at least \(c\,\epsilon^{-d/(2n)}\) trainable parameters, where \(c=c(d,n,k)>0\).
		\item[(b)] Let \(p\ge 0\), \(c_1>0\) be constants. For any \(\epsilon\in(0,\frac12)\), if a fixed $\operatorname{ReLU}^k$-MLP architecture of depth \(L\le c_1\ln^p(1/\epsilon)\) can approximate every \(f\in F_{d,n}\) with error \(\epsilon\), then the architecture must have at least
		\begin{equation}
		c_2\,\epsilon^{-d/n}\ln^{-2p-1}(1/\epsilon)
		\end{equation}
		trainable parameters, with a constant \(c_2=c_2(d,n,p,c_1,k)>0\).
	\end{enumerate}
	Moreover, there exist constants \(\epsilon_0=\epsilon_0(n)>0\) and
	\(c_0=c_0(k)>0\), both independent of \(d\), such that the lower
	bound in part~(a) has the dimension-uniform form
	\begin{equation}
	W\ge c_0
	\left(\frac{\epsilon_0}{\epsilon}\right)^{d/(2n)},
	\qquad 0<\epsilon\le\epsilon_0.
	\end{equation}
\end{lemma}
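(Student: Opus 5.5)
We follow the shattering argument of \citet[Theorem~4]{yarotsky2017error}, with ReLU replaced by $\sigma_k$. Fix $\epsilon$ and an integer $N\ge1$, and partition $[0,1]^d$ into $N^d$ congruent cubes of side $1/N$ with centers $\mathbf x_1,\ldots,\mathbf x_{N^d}$. Take a fixed $C^\infty$ bump $\psi:\mathbb R\to[0,1]$ supported in $(-\tfrac12,\tfrac12)$ with $\psi(0)=1$. Put $\Psi(\mathbf y):=\prod_{j=1}^d\psi(y_j)$. For signs $\mathbf s\in\{\pm1\}^{N^d}$ define
\begin{equation}
f_{\mathbf s}(\mathbf x):=\frac{1}{A\,N^{n}}\sum_{m=1}^{N^d}s_m\,\Psi\bigl(N(\mathbf x-\mathbf x_m)\bigr).
\end{equation}
The supports are disjoint, so $D^{\boldsymbol\alpha}f_{\mathbf s}$ is controlled by $N^{|\boldsymbol\alpha|-n}\|D^{\boldsymbol\alpha}\Psi\|_\infty/A$.

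The main point for the dimension-uniform statement is that $\|D^{\boldsymbol\alpha}\Psi\|_\infty=\prod_j\|\psi^{(\alpha_j)}\|_\infty$. At most $|\boldsymbol\alpha|\le n$ factors differ from $\|\psi\|_\infty=1$. Hence $\|D^{\boldsymbol\alpha}\Psi\|_\infty\le A:=\max(1,\max_{r\le n}\|\psi^{(r)}\|_\infty)^n$, a constant depending only on $n$, not on $d$. Then $f_{\mathbf s}\in F_{d,n}$ and $f_{\mathbf s}(\mathbf x_m)=s_m/(AN^n)$.

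Now suppose a $\operatorname{ReLU}^k$ architecture approximates every $f_{\mathbf s}$ within $\epsilon<1/(2AN^n)$. Then the thresholded outputs $\mathbb 1[\,g(\mathbf x_m)>0\,]$ realize every sign pattern, so the class shatters $N^d$ points. Choose $N=\lfloor(\epsilon_0/\epsilon)^{1/n}\rfloor$ with $\epsilon_0:=1/(4A)$, which depends only on $n$. Then $N\ge1$ when $\epsilon\le\epsilon_0$, and $N\ge\tfrac12(\epsilon_0/\epsilon)^{1/n}$. This gives $\mathrm{VCdim}\ge N^d\ge 2^{-d}(\epsilon_0/\epsilon)^{d/n}$. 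To remove the $2^{-d}$, replace $\epsilon_0$ by $\epsilon_0/2^n$, which is still $d$-free; then $N\ge(\epsilon_0'/\epsilon)^{1/n}$ once $\epsilon\le\epsilon_0'$, using $\lfloor t\rfloor\ge t/2$ for $t\ge1$.

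For the upper bound we use the VC bounds for piecewise-polynomial networks of fixed degree \citep{anthony2009neural,bartlett1998vc}. Each $\sigma_k$ has 2 pieces and degree $k$, so $\mathrm{VCdim}\le C_k W^2$ for arbitrary depth, and $\mathrm{VCdim}\le C_k\,W L\ln W+C_kWL^2$ for depth $L$. The constant $C_k$ depends only on $k$; this is exactly the step to check carefully in the cited theorems, since the polynomial degree grows like $k^L$ and enters only logarithmically.

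For part (a), combine $C_kW^2\ge(\epsilon_0/\epsilon)^{d/n}$ to get $W\ge C_k^{-1/2}(\epsilon_0/\epsilon)^{d/(2n)}$. This is the dimension-uniform form with $c_0=C_k^{-1/2}$. For general $\epsilon<1$ not below $\epsilon_0$, adjust $c(d,n,k)$, using $W\ge1$.

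For part (b), insert $L\le c_1\ln^p(1/\epsilon)$ into the depth-dependent bound. This gives $W\ln W\,\ln^{2p}(1/\epsilon)\gtrsim\epsilon^{-d/n}$. If $W\ge\epsilon^{-d/n}$ we are done. Otherwise $\ln W\le(d/n)\ln(1/\epsilon)$, which yields $W\ge c_2\epsilon^{-d/n}\ln^{-2p-1}(1/\epsilon)$. The condition $\epsilon<\tfrac12$ keeps $\ln(1/\epsilon)$ bounded below, so the constants absorb.

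The expected obstacle is bookkeeping that $\epsilon_0$ and $c_0$ are genuinely $d$-free. This rests on the product-bump derivative bound and on the $k$-only constant in the $O(W^2)$ VC bound.
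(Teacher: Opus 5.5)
Your proposal is correct and follows essentially the same route as the paper's proof. The shared steps are Yarotsky's shattering argument with the product bump $\prod_j\psi(x_j)$, whose derivative bound is $d$-free because at most $n$ factors are differentiated; the floor adjustment $\epsilon_0\mapsto\epsilon_0/2^n$ that absorbs the $2^{-d}$; and the $O_k(W^2)$ and depth-dependent piecewise-polynomial VC bounds. The differences are cosmetic: $\pm1$ labels thresholded at $0$ instead of $\{0,1\}$ labels thresholded at $3\epsilon/2$, and a case split on $W\gtrless\epsilon^{-d/n}$ in part (b) instead of the paper's comparison with $W_{c_2}\ln(2W_{c_2})$. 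In (b) you should write $W(1+\ln W)$ rather than $W\ln W$, so the $WL^2$ term is not lost when $W$ is small.
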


\noindent\emph{Proof sketch.}
Following \citet[Theorem~4]{yarotsky2017error}, choose a grid
resolution $N$, place $N^d$ separated points in $[0,1]^d$, and construct
disjoint smooth bumps whose
values encode arbitrary binary labels while remaining in $F_{d,n}$. Uniform
$\epsilon$-approximation of all these functions forces the thresholded network
class to shatter the $N^d$ points. Combining this VC-dimension lower bound
with the standard upper bounds for fixed-degree piecewise-polynomial networks
gives parts~(a) and~(b). The full argument, including the constants needed for
the dimension-uniform statement, is given in
Appendix~\ref{app:proof-repu-vc}.

\subsection{Standard spline-KANs suffer from CoD}

We use CoD in the worst-case approximation sense: at some fixed target
accuracy, the parameter complexity required for the entire function class
grows at least exponentially with the input dimension. The dimension-uniform
statement below makes this dependence explicit.

\begin{theorem}[Standard spline-KANs suffer from CoD]
\label{thm:kan_cod}
Let \(d,n,k\) satisfy \(d,n\ge 1\) and \(k\ge 1\), and write
\(H:=G+k\). Consider a standard spline-KAN architecture in the sense of
Definition~\ref{def:standard-spline-kan}, with input dimension \(d_0=d\),
scalar output \(d_L=1\), depth \(L\), width \(w\), grid size \(G\), and
parameter count \(P_{\mathrm{KAN}}\) as in \eqref{eq:kan-param-count}. Then the
following statements hold.
\begin{enumerate}
	\item[(a)] For any \(\epsilon\in(0,1)\), if such a standard spline-KAN architecture can approximate every \(f\in F_{d,n}\) with uniform-norm error at most \(\epsilon\), then
	\begin{equation}
	Lw^4H^2
	\ge
	c\,\epsilon^{-d/(2n)}
	\end{equation}
	and, in particular,
	\begin{equation}
	P_{\mathrm{KAN}}
	\ge
	c'\,\epsilon^{-d/(4n)}.
	\end{equation}
	Here \(c,c'>0\) depend only on \(d,n,k\).
	\item[(b)] Let \(p\ge 0\), \(c_1>0\) be constants. For any \(\epsilon\in(0,\frac12)\), if such a standard spline-KAN has depth \(L\le c_1\ln^p(1/\epsilon)\) and can approximate every \(f\in F_{d,n}\) with uniform-norm error at most \(\epsilon\), then
	\begin{equation}
	Lw^4H^2
	\ge
	c_2\,\epsilon^{-d/n}\ln^{-2p-1}(1/\epsilon)
	\end{equation}
	and, in particular,
	\begin{equation}
	P_{\mathrm{KAN}}
	\ge
	c_3\,\epsilon^{-d/(2n)}\ln^{-p-\frac12}(1/\epsilon),
	\end{equation}
	with constants \(c_2,c_3>0\) depending only on \(d,n,p,c_1,k\).
\end{enumerate}
Moreover, there exist constants \(\epsilon_0=\epsilon_0(n)>0\) and
\(\bar c,\bar c'>0\), depending on \(k\) but independent of \(d\), such
that, for \(0<\epsilon\le\epsilon_0\), part~(a) can be written as
\begin{equation}
Lw^4H^2
\ge
\bar c\left(\frac{\epsilon_0}{\epsilon}\right)^{d/(2n)},
\qquad
P_{\mathrm{KAN}}
\ge
\bar c'\left(\frac{\epsilon_0}{\epsilon}\right)^{d/(4n)}.
\end{equation}
Thus, for fixed \(\epsilon<\epsilon_0\), the parameter count grows at least exponentially in \(d\).
\end{theorem}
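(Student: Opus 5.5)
The plan is to transfer the MLP lower bound of Lemma~\ref{lem:repu_vc} to spline-KANs through the representation bridge of Lemma~\ref{lem:spline-kan-repu}.

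\textbf{Step 1: reduce to an MLP architecture.} Fix a standard spline-KAN architecture with shape $[d,d_1,\ldots,d_{L-1},1]$, grid size $G$ and degree $k$, and suppose it $\epsilon$-approximates every $f\in F_{d,n}$. For each target $f$ we may choose its own coefficients and knots. Lemma~\ref{lem:spline-kan-repu} then gives an exactly representing $\operatorname{ReLU}^k$-MLP with hidden width at most $C_kw^2H$, depth at most $C_kL$, and $P_{\mathrm{MLP}}\le C_kLw^4H^2$.

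The catch is that Lemma~\ref{lem:repu_vc} concerns a \emph{fixed} MLP architecture, whereas the construction might a priori produce different layer widths for different targets. I would handle this by padding. Every such MLP can be embedded, by zero weights and identity-like padding, into a single dense architecture of width $\lceil C_kw^2H\rceil$ and depth $\lceil C_kL\rceil$. Padding depth with exact identities is not available for $\sigma_k$, so I would instead rely on the construction in Appendix~\ref{app:proof-spline-kan-repu}: its layer structure is determined by the KAN shape alone, and knots enter only as biases. The resulting architecture is therefore fixed, and its parameter count $W$ satisfies $W\le C_kLw^4H^2$ (enlarging $C_k$ if needed).

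\textbf{Step 2: apply the MLP lower bound.} This fixed $\operatorname{ReLU}^k$-MLP architecture $\epsilon$-approximates all of $F_{d,n}$, so Lemma~\ref{lem:repu_vc}(a) gives $C_kLw^4H^2\ge W\ge c\,\epsilon^{-d/(2n)}$, which is the first claim of part~(a) after renaming constants. For part~(b), the MLP depth is at most $C_kL\le C_kc_1\ln^p(1/\epsilon)$, so Lemma~\ref{lem:repu_vc}(b) applies with $c_1$ replaced by $C_kc_1$ and yields the $Lw^4H^2$ bound. The dimension-uniform form follows in the same way from the ``moreover'' clause of Lemma~\ref{lem:repu_vc}. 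Since $C_k$ depends only on $k$, we get $\bar c=c_0/C_k$, independent of $d$.

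\textbf{Step 3: convert to $P_{\mathrm{KAN}}$.} Use $P_{\mathrm{MLP}}\le C_kP_{\mathrm{KAN}}^2$ from Lemma~\ref{lem:spline-kan-repu} in place of the width bound. This gives $P_{\mathrm{KAN}}\ge (W/C_k)^{1/2}$, and taking square roots turns $\epsilon^{-d/(2n)}$ into $\epsilon^{-d/(4n)}$. In part~(b) it turns $\epsilon^{-d/n}\ln^{-2p-1}$ into $\epsilon^{-d/(2n)}\ln^{-p-1/2}$, and in the uniform form it gives $\bar c'=(c_0/C_k)^{1/2}$. Finally, fixing $\epsilon<\epsilon_0$ makes $(\epsilon_0/\epsilon)^{d/(4n)}$ exponential in $d$.

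\textbf{Main obstacle.} The only delicate point is Step~1: checking that the MLP produced by the bridge has a target-independent architecture, so that the VC argument, which bounds a fixed parametrized family, applies. I would verify this directly from the edge-level realization of Lemma~\ref{lem:spline-edge-repu}. There the number of shifted $\sigma_k$ units per edge is fixed at $(k+2)(G+k)$, the knots enter as biases, and the $\beta_m$ enter as outer weights. The network graph is therefore independent of the target, and unused units can simply carry zero weights.
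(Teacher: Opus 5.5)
Your proposal is correct and follows essentially the same route as the paper: you realize the spline-KAN class inside one fixed $\operatorname{ReLU}^k$-MLP architecture via Lemma~\ref{lem:spline-kan-repu}, where knots enter only as biases so the architecture does not depend on the target. You then apply Lemma~\ref{lem:repu_vc} separately with $P_{\mathrm{MLP}}\le C_kLw^4H^2$ and $P_{\mathrm{MLP}}\le C_kP_{\mathrm{KAN}}^2$, and with $L_{\mathrm{MLP}}\le C_k'L$ for part~(b); your constants $\bar c=c_0/C_k$ and $\bar c'=(c_0/C_k)^{1/2}$ match what the paper's argument gives, and the padding detour in Step~1 is unnecessary because the construction already fixes the architecture.
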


\noindent\emph{Proof sketch.}
Apply Lemma~\ref{lem:spline-kan-repu} to the given spline-KAN
architecture class. If it uniformly approximates $F_{d,n}$, the resulting
$\operatorname{ReLU}^k$-MLP architecture does so as well, with
$P_{\mathrm{MLP}}\le C_kLw^4(G+k)^2$ and
$P_{\mathrm{MLP}}\le C_kP_{\mathrm{KAN}}^2$. Lemma~\ref{lem:repu_vc}
then yields the two structural lower bounds and, after taking square roots,
the stated bounds for $P_{\mathrm{KAN}}$. The logarithmic-depth case follows
in the same way from Lemma~\ref{lem:repu_vc}(b). See
Appendix~\ref{app:proof-kan-cod} for the complete proof.

Our theorem shows that standard spline-KANs still suffer from worst-case CoD
on isotropic Sobolev balls. Together with the composition-matched MLP
results in Section~\ref{sec:composition-matched}, this suggests that KANs and
MLPs can have comparable approximation capabilities on structured targets.
We next examine their relative performance experimentally.

\section{Empirical Boundaries Between KANs and MLPs}
\label{sec:experiments}

We revisit the KAN--MLP scaling-law comparison by constructing KAN
approximants from known target decompositions and transferring them to MLPs.
We then compare trained models on controlled ridge targets and on MNIST with
standard or random labels. Protocols are in
Appendix~\ref{app:experimental-details}. In the ridge scans, $P$ counts trainable
parameters used in the numerical forward pass, and MSE denotes mean squared
error.

\subsection{Revisiting the KAN--MLP scaling-law comparison}
\label{sec:constructive-scaling}

\citet{liu2024kan} compare trained KANs and MLPs on structured
functions and report faster error decay for KANs. We revisit four of those
targets---$J_0(20x)$, $e^{\sin(\pi x)+y^2}$, a 100-dimensional composition,
and a four-dimensional nested composition---to distinguish what an MLP can
represent from what training finds. We omit $xy$, which our degree-three spline
construction represents up to numerical precision at every tested grid.
Using the known target decompositions, we construct spline-only KAN
approximants without gradient-based optimization and transfer them
algebraically to ReLU$^3$ MLPs. The transfer preserves the represented
function at step zero. In Figure~\ref{fig:constructive-scaling}, validation
selects this unmodified MLP for the 100-dimensional and four-dimensional
targets, and a lower-error, fine-tuned MLP for Bessel and exp-sine. The MLP
curves lie farther right because their full dense parameter counts are larger.
Further protocol details are in Appendix~\ref{app:constructive-scaling}.

\begin{figure}[H]
  \centering
  \includegraphics[width=0.98\linewidth]{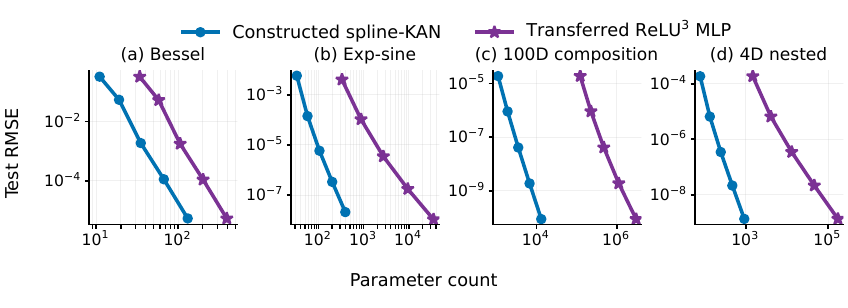}
  \caption{Constructive scaling on four targets from \citet{liu2024kan}.
  Blue: constructed degree-three spline-KAN test RMSE versus spline-coefficient
  count. Purple: transferred ReLU$^3$ MLP test RMSE versus dense parameter
  count, at the validation-selected checkpoint (step zero or up to 200 Adam
  steps). At step zero the MLP represents the same function as its KAN source.
  All panels use $G\in\{8,16,32,64,128\}$.}
  \label{fig:constructive-scaling}
\end{figure}

\subsection{Interaction support and the finite-budget boundary}
\label{sec:ridge-boundary}

For input dimension $d$, number of ridge components $R$, and per-ridge
support size $s\le d$, we consider standardized ridge-profile targets
\begin{equation}
  \widetilde f_{\varphi}(\mathbf x)=\sum_{r=1}^{R} a_r
  \varphi\!\left(\boldsymbol v_r^\top(2\mathbf x-\mathbf 1)+b_r\right),
  \qquad
  f_{\varphi}(\mathbf x)=\frac{\widetilde f_{\varphi}(\mathbf x)-\mu_{\varphi}}{\sigma_{\varphi}},
  \label{eq:ridge-profile-teacher}
\end{equation}
Here $\mathbf x$ is uniform on $[0,1]^d$, $R$ counts ridge components,
$a_r$ and $b_r$ are coefficients and offsets, and $\mathbf 1$ is the
all-ones vector. Each direction $\boldsymbol v_r$ uses exactly $s$
coordinates. We test
$\varphi(t)\in\{t^2,\sin(2\pi t),\tanh(t),\operatorname{SiLU}(t)\}$.
$\mu_{\varphi}$ and $\sigma_{\varphi}$ are estimated on an independent
reference set. At $s=1$, the target is additive and each nonlinear component
depends on one coordinate. At $s>1$, a component can couple coordinates,
introducing interactions. The coefficient, direction, and offset sampling
rules are in Appendix~\ref{app:experimental-details}.

We scan $d,R\in\{4,16,64\}$ and $s\in\{1,2,4,\ldots,d\}$, comparing degree-three
spline-only KANs with MLPs using ReLU, tanh, SiLU, or sine activation.
At each budget, each family selects its run across architectures, learning
rates, and three seeds by minimum training-probe MSE. We report its
independent test MSE. We use the square profile
$\varphi(t)=t^2$ with $d=16$ as the main-text example.

Figure~\ref{fig:ridge-square-boundary} reports
\begin{equation}
  \rho(P)=
  \frac{\operatorname{MSE}_{\text{best MLP}}(P)}
       {\operatorname{MSE}_{\text{best KAN}}(P)},
  \label{eq:ridge-profile-ratio}
\end{equation}
so $\rho>1$ favors KAN and $\rho<1$ favors MLP. On square targets at
moderate budgets, the decoupled $s=1$ targets consistently favor KAN. Once
each component involves multiple coordinates, this advantage contracts
sharply. At the tightest budgets, the interacting targets generally favor
MLPs. At the largest tested budget, every displayed curve favors KAN.
Complete results across all tested profiles and dimensions, together with
additional diagnostics, are
reported in Appendix~\ref{app:ridge-robustness}.

\begin{figure}[H]
  \centering
  \includegraphics[width=0.98\linewidth]{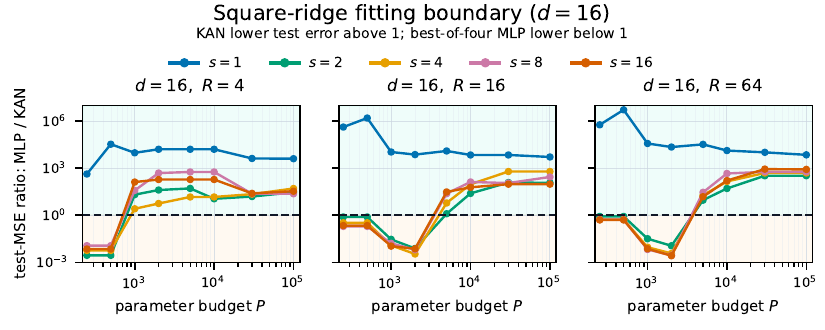}
  \caption{Training-probe-selected test-MSE ratios on square-ridge targets
  with $d=16$ and $R\in\{4,16,64\}$. Each family is selected independently
  within budget $P$. MLP candidates include four activations. The ratio
  $\rho>1$ favors KAN, and $s$ is the number of active coordinates per ridge.
  The full nine-panel scan is in Appendix~\ref{app:ridge-robustness}.}
  \label{fig:ridge-square-boundary}
\end{figure}

\subsection{MNIST classification and random-label test under matched budgets}
\label{sec:mnist-random-label}

We compare MNIST with standard labels to a random-label test in which each
training label is independently sampled from the ten classes
\citep{zhang2017rethinking}. We report test accuracy for standard labels and
final training accuracy for random labels to measure how well the models fit
arbitrary image--label assignments. Both families reach high test accuracy
with standard labels, but ReLU-MLPs do so with fewer parameters and fit
random labels far more efficiently than spline-only KANs.
The full protocol and budget frontiers are in
Appendix~\ref{app:mnist-experiment}.

\begin{table}[ht]
  \caption{MNIST accuracies (\%) with $N_{\mathrm{train}}=60{,}000$
  within selected trainable-parameter budgets.
  Standard labels report test accuracy and random labels report final training
  accuracy.
  Each entry is the best architecture median over three seeds after selecting
  optimizer and learning rate by final training accuracy. Bold marks the higher
  median within each label condition and budget.}
  \label{tab:mnist-normal-random}
  \centering
  \small
  \begin{tabular}{@{}llrrrr@{}}
    \toprule
    Labels & Model & $P\leq5\mathrm{k}$ & $P\leq20\mathrm{k}$ & $P\leq100\mathrm{k}$ & $P\leq500\mathrm{k}$ \\
    \midrule
    Standard & ReLU-MLP & \textbf{87.5} & \textbf{94.9} & \textbf{98.0} & \textbf{98.6} \\
    Standard & Spline-only KAN & 54.8 & 89.6 & 96.0 & 97.3 \\
    \midrule
    Random & ReLU-MLP & 11.7 & \textbf{18.2} & \textbf{47.6} & \textbf{100.0} \\
    Random & Spline-only KAN & \textbf{12.3} & 13.4 & 16.9 & 19.4 \\
    \bottomrule
  \end{tabular}
\end{table}

\section{Conclusion}
\label{sec:conclusion}

  Theoretically, we prove that arbitrary-depth standard spline-KANs need
  exponentially many spline coefficients to approximate isotropic Sobolev
  balls below a fixed accuracy threshold. Their architecture therefore does
  not remove worst-case CoD. Earlier composition-matched MLP results show
  that favorable dimension dependence on low-arity targets is not exclusive
  to KANs.

  Empirically, we revisit the scaling-law targets of \citet{liu2024kan}:
  transferring constructed spline-KAN approximants to MLPs separates what
  can be represented from what training discovers. On controlled ridge
  targets, KANs strongly favor decoupled components, whereas interacting
  targets often favor MLPs at tight budgets. The boundary shifts with the
  target nonlinearity and budget. The random-label test on MNIST further probes fitting
  outside these structured targets.

\textbf{Limitations.}
The theorem covers spline-only standard KANs with arbitrary edge-wise knot
locations, not variants with SiLU base branches, multiplication nodes, or
non-spline edges.

\subsection*{Acknowledgments}
This work was supported by the National Natural Science Foundation of China
(62476016, 62441617), and the Fundamental Research Funds for the Central
Universities and the Beijing Natural Science Foundation (Grant No. QY24137).

\subsection*{Code Availability}
Experiment code and compact result tables are available in the
\href{https://github.com/Musicath/kan-cod-experiments}{GitHub repository}.

\subsection*{AI use statement}
We used generative AI tools to assist with manuscript editing, proof checking,
literature search and citation checking, code implementation and debugging,
and the preparation and interpretation of synthetic experiments and figures.
The research question, mathematical claims and constructions, and experimental
objectives were formulated by the authors. All AI-assisted outputs were
reviewed and verified by the authors, who take full responsibility for the
final content.

\bibliography{arxiv/main}
\bibliographystyle{iclr2027_conference}
\newpage

\appendix
\section{Additional Model Details}
\label{app:model-details}

\subsection{B-spline convention}

For the augmented knot sequence
$\mathcal T=(\tau_1,\ldots,\tau_{G+2k+1})$ introduced in
Section~\ref{sec:prelim-spline-kans}, the central knots
$\xi_j:=\tau_{k+1+j}$, $j=0,\ldots,G$, form the $G$ grid intervals. The
remaining $k$ knots on each side are boundary or extension knots. Starting
from $B_{r,0}(x)=\mathbf 1_{[\tau_r,\tau_{r+1})}(x)$, where $\mathbf 1_A$
denotes the indicator of a set $A$, and using the usual
right-endpoint convention, the Cox--de Boor recursion is
\begin{equation}
B_{r,m}(x)
=\frac{x-\tau_r}{\tau_{r+m}-\tau_r}B_{r,m-1}(x)
+\frac{\tau_{r+m+1}-x}{\tau_{r+m+1}-\tau_{r+1}}B_{r+1,m-1}(x),
\qquad m=1,\ldots,k.
\end{equation}
Thus $B_{r,k}$ is a degree-$k$ piecewise polynomial supported on
$[\tau_r,\tau_{r+k+1}]$ and is $C^{k-1}$ across simple interior knots.

\subsection{Scope relative to practical and non-spline KANs}

\citet{liu2024kan} parameterize a practical KAN edge function as
\begin{equation}
\phi(x)=w_b b(x)+w_s\,\operatorname{spline}(x),
\qquad b(x)=\operatorname{SiLU}(x):=\frac{x}{1+e^{-x}},
\end{equation}
where $w_b$ and $w_s$ are trainable scalar weights. The base function is
introduced to improve optimization behavior, and
$\operatorname{spline}(x)$ denotes the B-spline expansion on that edge. It
forms an edge-wise additive
branch, distinct from a layer-wise residual or skip connection, and adds the
trainable scalars $w_b,w_s$. This SiLU branch is outside the
spline-only model analyzed in the main text. By contrast, a grid update only
relocates knots: after the update, each edge is still a degree-$k$ spline with
$G$ intervals, so arbitrary edge-wise knot locations are covered by
Definition~\ref{def:standard-spline-kan} and the representation bridge.

The same boundary excludes Fourier KANs \citep{zhang2025kolmogorov}, rational
KANs \citep{aghaei2024rkan}, wavelet-based KANs
\citep{bozorgasl2024wavkan,seydi2024unveiling}, and polynomial-basis variants
\citep{sidharth2024chebyshev,seydi2024exploring}. These models require
separate representation theorems, VC-dimension estimates, or approximation
lower-bound arguments beyond the spline-only scope of this paper.

\section{Proofs and Supporting Results}
\label{app:proofs}

\subsection{Proof of the parameter-count lemma}
\label{app:proof-spline-kan-repu}
\begin{proof}[Proof of Lemma~\ref{lem:spline-kan-repu}]
Set \(H:=G+k\) and \(a_\ell:=d_\ell d_{\ell+1}\). By Lemma~\ref{lem:spline-edge-repu}, each KAN layer \(\ell\) can be realized by \(M_\ell\le C_k a_\ell H\) shifted \(\operatorname{ReLU}^k\) units, followed by an affine summation that forms \(x^{(\ell+1)}\). To write the stacked construction as an ordinary MLP, use the affine map that produces the shifted \(\operatorname{ReLU}^k\) arguments for the first KAN layer, then compose the affine summation of layer \(\ell-1\) with the affine shift map of layer \(\ell\) for \(\ell=1,\ldots,L-1\), and finally use the affine summation of the last KAN layer. This gives an exact \(\operatorname{ReLU}^k\)-MLP realization of \(g\).

The hidden widths are \(M_\ell\le C_kd_\ell d_{\ell+1}H\le C_kw^2H\), and the number of nonlinear stages is \(L\). Hence \(W_{\mathrm{hid}}\le C_kw^2(G+k)\) and \(L_{\mathrm{MLP}}\le C_kL\), after adjusting the constant.

For the parameter count, the first affine map has at most \(M_0(d_0+1)\) parameters, the intermediate affine maps have at most \(M_\ell(M_{\ell-1}+1)\) parameters for \(\ell=1,\ldots,L-1\), and the final affine map has at most \(d_L(M_{L-1}+1)\) parameters. Therefore
\[
\begin{aligned}
P_{\mathrm{MLP}}
&\le C_kH^2\biggl(a_0^2
+\sum_{\ell=1}^{L-1}a_{\ell-1}a_\ell
+a_{L-1}^2\biggr)\\
&\le C_kH^2\biggl(\sum_{\ell=0}^{L-1}a_\ell\biggr)^2
= C_kP_{\mathrm{KAN}}^2.
\end{aligned}
\]
Bounding each summand in the first parenthesis by $w^4$ gives the coarser bound
\[
P_{\mathrm{MLP}}
\le
C_kLw^4H^2
=
C_kLw^4(G+k)^2.
\]
\end{proof}

\subsection{Proof of the \texorpdfstring{$\operatorname{ReLU}^k$}{ReLU-k} VC-dimension lower bound}
\label{app:proof-repu-vc}
\begin{proof}[Proof of Lemma~\ref{lem:repu_vc}]
We follow \citet[Theorem~4]{yarotsky2017error}. Let \(\mathcal H\) denote the
binary function class obtained by thresholding the outputs of the fixed
architecture while varying its \(W\) trainable parameters. Since
\(\sigma_k(x)=(x)_+^k\) is piecewise polynomial with two pieces and
fixed degree \(k\), \citet[Theorems~8.7 and~8.8]{anthony2009neural}
(see also \citealp{bartlett1998vc}) give
\begin{equation}
\label{eq:vcdim-w2}
\mathrm{VCdim}(\mathcal H)\le c_3W^2
\end{equation}
and
\begin{equation}
\label{eq:vcdim-depth}
\mathrm{VCdim}(\mathcal H)
\le \widetilde c_3L^2W\ln(2W),
\end{equation}
where \(c_3,\widetilde c_3>0\) depend only on \(k\). These are the only
activation-specific estimates used below.

Given a positive integer \(N\), let
\(S=\{\mathbf{x}_1,\ldots,\mathbf{x}_{N^d}\}\) be the regular grid of
cube centers in \([0,1]^d\), so distinct points have
\(\ell_\infty\)-distance
$\|\mathbf x-\mathbf x'\|_\infty:=\max_j|x_j-x'_j|$ at least \(1/N\).
To make the dependence on
\(d\) explicit, fix \(\psi\in C^\infty(\mathbb R)\) with
\(0\le\psi\le1\), \(\psi(0)=1\), and \(\psi(t)=0\) for
\(|t|\ge1/2\), and set
\begin{equation}
\phi_d(\mathbf{x})=\prod_{j=1}^d\psi(x_j),
\qquad
c_{4,n}:=
\left(\max_{0\le r\le n}\|\psi^{(r)}\|_\infty\right)^{-n}.
\end{equation}
Then \(c_{4,n}\) is independent of \(d\), and
\(\|D^{\boldsymbol\alpha}\phi_d\|_\infty\le c_{4,n}^{-1}\) whenever
\(|\boldsymbol\alpha|\le n\). For arbitrary coefficients
$y_1,\ldots,y_{N^d}\in\mathbb R$, define
\begin{equation}
\label{eq:bump-function}
f(\mathbf{x})=
\sum_{m=1}^{N^d}y_m
\phi_d\bigl(N(\mathbf{x}-\mathbf{x}_m)\bigr).
\end{equation}
At every point, at most one summand and its derivatives are nonzero.
Hence
\begin{equation}
\label{eq:y-bound}
\max_m|y_m|\le c_{4,n}N^{-n}
\end{equation}
implies \(f\in F_{d,n}\).

Set
\begin{equation}
\label{eq:epsilon-zero}
\epsilon_0:=\frac{c_{4,n}}{3\cdot2^n},
\qquad
N:=\left\lfloor
\left(\frac{c_{4,n}}{3\epsilon}\right)^{1/n}
\right\rfloor,
\qquad 0<\epsilon\le\epsilon_0.
\end{equation}
Then
\begin{equation}
\label{eq:N-floor-bound}
N\ge\frac12
\left(\frac{c_{4,n}}{3\epsilon}\right)^{1/n}.
\end{equation}
For an arbitrary Boolean vector
\(\mathbf z=(z_1,\ldots,z_{N^d})\), take
\(y_m=3\epsilon z_m\) in \eqref{eq:bump-function}. By the choice of
\(N\), condition \eqref{eq:y-bound} holds. If a fixed architecture
approximates every function in \(F_{d,n}\) within \(\epsilon\), write
$\eta(\mathbf x;\mathbf w)$ for its output at input $\mathbf x$ with
parameter vector $\mathbf w$. Then some parameter vector
\(\mathbf w_{\mathbf z}\) satisfies
\begin{equation}
\begin{cases}
\eta(\mathbf{x}_m;\mathbf w_{\mathbf z})>3\epsilon/2,&z_m=1,\\
\eta(\mathbf{x}_m;\mathbf w_{\mathbf z})<3\epsilon/2,&z_m=0.
\end{cases}
\end{equation}
Thus the thresholded network shatters \(S\), and
\begin{equation}
\label{eq:vcdim-lower}
\mathrm{VCdim}(\mathcal H)\ge N^d
\ge
\left(\frac{\epsilon_0}{\epsilon}\right)^{d/n}.
\end{equation}

Combining \eqref{eq:vcdim-lower} with \eqref{eq:vcdim-w2} proves
part~(a), including the dimension-uniform form
\begin{equation}
W\ge c_3^{-1/2}
\left(\frac{\epsilon_0}{\epsilon}\right)^{d/(2n)}.
\end{equation}
For fixed \(d,n,k\), decreasing the constant covers the remaining
\(\epsilon\in[\epsilon_0,1)\).

For part~(b), \eqref{eq:vcdim-depth} and
\(L\le c_1\ln^p(1/\epsilon)\) give
\begin{equation}
\label{eq:wlogw-lower}
W\ln(2W)
\ge
\frac{1}{\widetilde c_3c_1^2}
\left(\frac{\epsilon_0}{\epsilon}\right)^{d/n}
\ln^{-2p}(1/\epsilon).
\end{equation}
Exactly as in \citet{yarotsky2017error}, define the candidate scale
\(W_{c_2}:=c_2\epsilon^{-d/n}\ln^{-2p-1}(1/\epsilon)\). Then
\begin{equation}
W_{c_2}\ln(2W_{c_2})
=
\left(c_2\frac dn+o(1)\right)
\epsilon^{-d/n}\ln^{-2p}(1/\epsilon).
\end{equation}
Here $o(1)\to0$ as $\epsilon\downarrow0$.
Choosing \(c_2>0\) sufficiently small and comparing with
\eqref{eq:wlogw-lower} yields
\begin{equation}
W\ge c_2\epsilon^{-d/n}\ln^{-2p-1}(1/\epsilon).
\end{equation}
A further decrease of \(c_2\) covers all
\(\epsilon\in(0,1/2)\), proving part~(b).
\end{proof}
\subsection{Proof of the spline-KAN CoD theorem}
\label{app:proof-kan-cod}
\begin{proof}[Proof of Theorem~\ref{thm:kan_cod}]
Let \(\mathcal K\) be the spline-KAN architecture in the theorem. For every
admissible choice of its edge-wise knots,
Lemma~\ref{lem:spline-kan-repu} realizes those knot locations as affine
biases in the same \(\operatorname{ReLU}^k\)-MLP architecture. The MLP width
and depth depend on the shape, $k$, and $G$ and are independent of the knot values. Thus
the resulting MLP class contains every function represented by \(\mathcal K\),
with constants $C_k,C'_k>0$ depending only on $k$ such that
\begin{equation}
\label{eq:transfer-upper-bounds}
P_{\mathrm{MLP}}\le C_kLw^4H^2,
\qquad
P_{\mathrm{MLP}}\le C_kP_{\mathrm{KAN}}^2,
\qquad
L_{\mathrm{MLP}}\le C'_kL.
\end{equation}

If \(\mathcal K\) approximates every \(f\in F_{d,n}\) within
\(\epsilon\), so does this MLP architecture. Part~(a) of
Lemma~\ref{lem:repu_vc}, combined separately with the first two
bounds in \eqref{eq:transfer-upper-bounds}, gives
\begin{equation}
Lw^4H^2\ge c\epsilon^{-d/(2n)},
\qquad
P_{\mathrm{KAN}}\ge c'\epsilon^{-d/(4n)}.
\end{equation}
Using instead the dimension-uniform form of that theorem gives
\begin{equation}
Lw^4H^2
\ge
\bar c\left(\frac{\epsilon_0}{\epsilon}\right)^{d/(2n)},
\qquad
P_{\mathrm{KAN}}
\ge
\bar c'\left(\frac{\epsilon_0}{\epsilon}\right)^{d/(4n)},
\end{equation}
where \(\epsilon_0=\epsilon_0(n)\) and \(\bar c,\bar c'\) are independent
of \(d\).

Finally, if \(L\le c_1\ln^p(1/\epsilon)\), then
\(L_{\mathrm{MLP}}\le C'_kc_1\ln^p(1/\epsilon)\). Applying
Lemma~\ref{lem:repu_vc}(b) and again using the first two bounds in
\eqref{eq:transfer-upper-bounds} yields, respectively,
\begin{equation}
Lw^4H^2
\ge c_2\epsilon^{-d/n}\ln^{-2p-1}(1/\epsilon)
\end{equation}
and
\begin{equation}
P_{\mathrm{KAN}}
\ge c_3\epsilon^{-d/(2n)}
\ln^{-p-\frac12}(1/\epsilon).
\end{equation}
\end{proof}

\section{Experimental Details}
\label{app:experimental-details}

Parameter counts include every trainable scalar that participates in the
forward computation. Fixed grid locations, masks, and disabled branches are
excluded. In the ridge-profile experiment, each active KAN edge contributes
$G+k$ spline coefficients and one trainable spline scale. In the MNIST
experiment, the scale is fixed, so each edge contributes only its $G+k$
coefficients. Hidden depth denotes the number of nonlinear hidden stages. All
reported models omit layer-wise residual connections, and every reported KAN
is spline-only, with its edge-wise SiLU base branch set identically to zero.

\subsection{Constructive scaling protocol and source comparison}
\label{app:constructive-scaling}

Figure~\ref{fig:constructive-scaling-appendix} repeats the main-text
construction. It is not a reproduction of the trained curves in
\citet{liu2024kan}. We use
$G\in\{8,16,32,64,128\}$ intervals on every
active edge and cubic interpolation of the known scalar functions. Each
edge has $G+3$ B-spline coefficient slots. Its interpolant is converted
algebraically to a shifted ReLU$^3$ expansion, and the resulting expansions
are assembled into an ordinary dense MLP. All affine entries, including
those initialized to zero by the construction, are counted for the MLP.
Consequently, the horizontal coordinates of the two families use stored
trainable slots, but they do not represent a matched-parameter comparison.

The one-dimensional Bessel target uses a $[1,1]$ KAN on $[-1,1]$.
The target $e^{\sin(\pi x)+y^2}$ uses $[2,1,1]$ on $[-1,1]^2$.
For the 100-dimensional target we use
$\exp\!\bigl(100^{-1}\sum_{i=1}^{100}\sin^2(\pi x_i/2)\bigr)$ and a
$[100,1,1]$ construction on $[0,1]^{100}$. The input domain for this
target is not specified in \citet{liu2024kan}. For the four-dimensional
nested target, we follow the formula shown there,
$\exp\!\bigl(\sin(x_1^2+x_2^2)+\sin(x_3^2+x_4^2)\bigr)$, on $[-1,1]^4$.
Our explicit construction has shape $[4,2,1,1]$. The paper's prose,
rendered formula, and plotted topology are not fully consistent. We omit
$xy$ from the main figure because
$xy=((x+y)^2-(x-y)^2)/4$ is represented exactly by cubic-spline edges,
leaving only floating-point error.

We use independently scrambled Sobol sets of 1,024 training, 2,048
validation, and 8,192 test points, fixed across grids. The transferred
MLPs are continued in float64 with Adam for at most 200 steps, minibatches
of 256, and learning rate $10^{-6}(8/G)^4$. The validation set selects
among steps $0,1,2,5,10,20,50,100,150,200$, including the unmodified
transfer at step zero. The independent
test set is used after selection. All computations run on a server CPU with the
edge-wise SiLU branch absent. This one-seed protocol does not estimate
training variability. For the 100-dimensional and four-dimensional targets,
validation selects step zero at every grid, so their purple curves show
unmodified transfers.

For the original trained KAN/MLP scaling-law curves, see
\citet{liu2024kan}. Their optimization protocol and parameter-count
convention differ from ours, so the figures do not form a matched comparison.

\begin{figure}[H]
  \centering
  \includegraphics[width=0.98\linewidth]{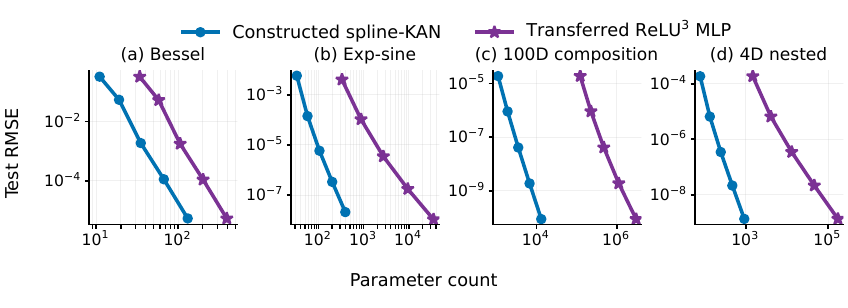}
  \caption{Constructive scaling on four targets from \citet{liu2024kan}.
  Blue: constructed cubic spline-KAN test RMSE versus spline-coefficient
  count. Purple: transferred ReLU$^3$ MLP test RMSE versus dense parameter
  count, at the validation-selected checkpoint (step zero or up to 200 Adam
  steps). At step zero the MLP represents the same function as its KAN source.
  All panels use $G\in\{8,16,32,64,128\}$.}
  \label{fig:constructive-scaling-appendix}
\end{figure}

\subsection{Controlled ridge-profile protocol}

The targets are defined in \eqref{eq:ridge-profile-teacher}. Before normalization,
the coefficients and direction entries are sampled independently from the
standard normal distribution $\mathcal N(0,1)$. For each target instance, the coefficient vector
$\boldsymbol a=(a_1,\ldots,a_R)$ and every direction $\boldsymbol v_r$ are
normalized in $\ell_2$. The support of each $\boldsymbol v_r$
is sampled uniformly subject to $\|\boldsymbol v_r\|_0=s$, where
$\|\boldsymbol v\|_0$ denotes the number of nonzero coordinates of
$\boldsymbol v$, and each offset $b_r$ is sampled uniformly from
$[-0.5,0.5]$. The mean $\mu_{\varphi}$ and standard
deviation $\sigma_{\varphi}$ are estimated from 16,384 independently sampled reference
points. Training uses fresh online samples at every step. Fixed evaluation
sets contain 4,096 training-probe points and 16,384 validation and test points
each.

\begin{table}[H]
\caption{Controlled ridge-profile scan.}
\label{tab:ridge-profile-hparams}
\centering
\small
\begin{tabular}{@{}>{\raggedright\arraybackslash}p{0.24\linewidth}>{\raggedright\arraybackslash}p{0.70\linewidth}@{}}
\toprule
Item & Setting \\
\midrule
Target grid & $d,R\in\{4,16,64\}$, and
$s\in\{1,2,4,\ldots,d\}$ \\
Target profiles & $\varphi(t)=t^2$, $\sin(2\pi t)$, $\tanh(t)$, or
$\operatorname{SiLU}(t)=t/(1+e^{-t})$ \\
Hidden depth & $L\in\{2,4,6\}$ for both families \\
MLP & Width $w\in\{1,4,16,64,256\}$; ReLU, tanh, SiLU, or sine
activation \\
Spline-only KAN & Width $w\in\{1,4,16,64\}$; $k=3$;
$G\in\{3,7,15,31\}$; zero edge-wise base branch \\
Optimization & MSE; Adam; zero weight decay; 5,000 online-sampling steps;
batch size 4,096; 500-step linear warmup followed by cosine decay to $1\%$ of
the base rate \\
Learning rates & $10^{-3}$, $3\times10^{-3}$, and $10^{-2}$ for both
families \\
Seeds & Seed index $q\in\{0,1,2\}$ \\
Budgets & $P\in\{250,500,1000,2000,5000,10^4,3\times10^4,10^5\}$ \\
Selection & Within each target cell, model family, and budget, select the run
with minimum training-probe MSE over architectures, learning rates, and seeds;
report both its training-probe MSE and its independent test MSE \\
\bottomrule
\end{tabular}
\end{table}

Within each run, validation MSE evaluated every 100 steps selects the
checkpoint. The fixed training probe is reserved for constructing the final
envelope. Pooling the three seeds is deliberate because the experiment asks
for the best fit found within a fixed search budget. This convention is
applied symmetrically to both families. For each profile, the target cells
form a complete Cartesian product under the scan in
Table~\ref{tab:ridge-profile-hparams}. There are $45$ target cells per
profile, $324$ architecture--learning-rate configurations per cell and seed,
and $174{,}960$ successful runs across all four profiles and three seeds.
No manifest jobs are missing or failed.

\subsection{Target profile and activation sensitivity}
\label{app:ridge-robustness}

Figure~\ref{fig:ridge-square-full-appendix} shows the complete square-profile
scan. Its $d=16$ row is Figure~\ref{fig:ridge-square-boundary} in the main text.
The main-text envelope allows the MLP to select among four activations. This
provides a strong comparator, but the sine target makes activation alignment
especially relevant. Figures~\ref{fig:ridge-sine-appendix}--
\ref{fig:ridge-tanh-appendix} show the three other target profiles under the
same scan and selection rule as Figure~\ref{fig:ridge-square-boundary}.
All four profiles exhibit a strong KAN regime at $s=1$, while the crossover
at larger $s$ depends on the target and parameter budget. In particular,
square and SiLU do not support a universal MLP win for interacting targets.
For sine at $d=64$, some dense cells leave both models near the standardized
unit-variance baseline. Their near-unit error ratios are not evidence of
successful fitting by either family.

Figure~\ref{fig:sine-ridge-nonsine} removes
the target-matched sine activation and lets the MLP choose among ReLU, tanh,
and SiLU. KAN then has lower test error in more cells, including 20 of 24 at
$P=10^4$ within the $d\in\{4,16\}$ subset, but the gap between $s=1$ and
denser supports remains large.
Figure~\ref{fig:sine-ridge-relu} fixes ReLU and shows the same directional
effect. The erosion of KAN's sparse-target advantage therefore persists with
a ReLU comparator, although target--activation matching substantially changes
where the crossover occurs.
Figure~\ref{fig:sine-ridge-train-probe} reports training-probe ratios for
the same selected models.

\begin{figure*}[p]
  \centering
  \includegraphics[width=0.98\textwidth]{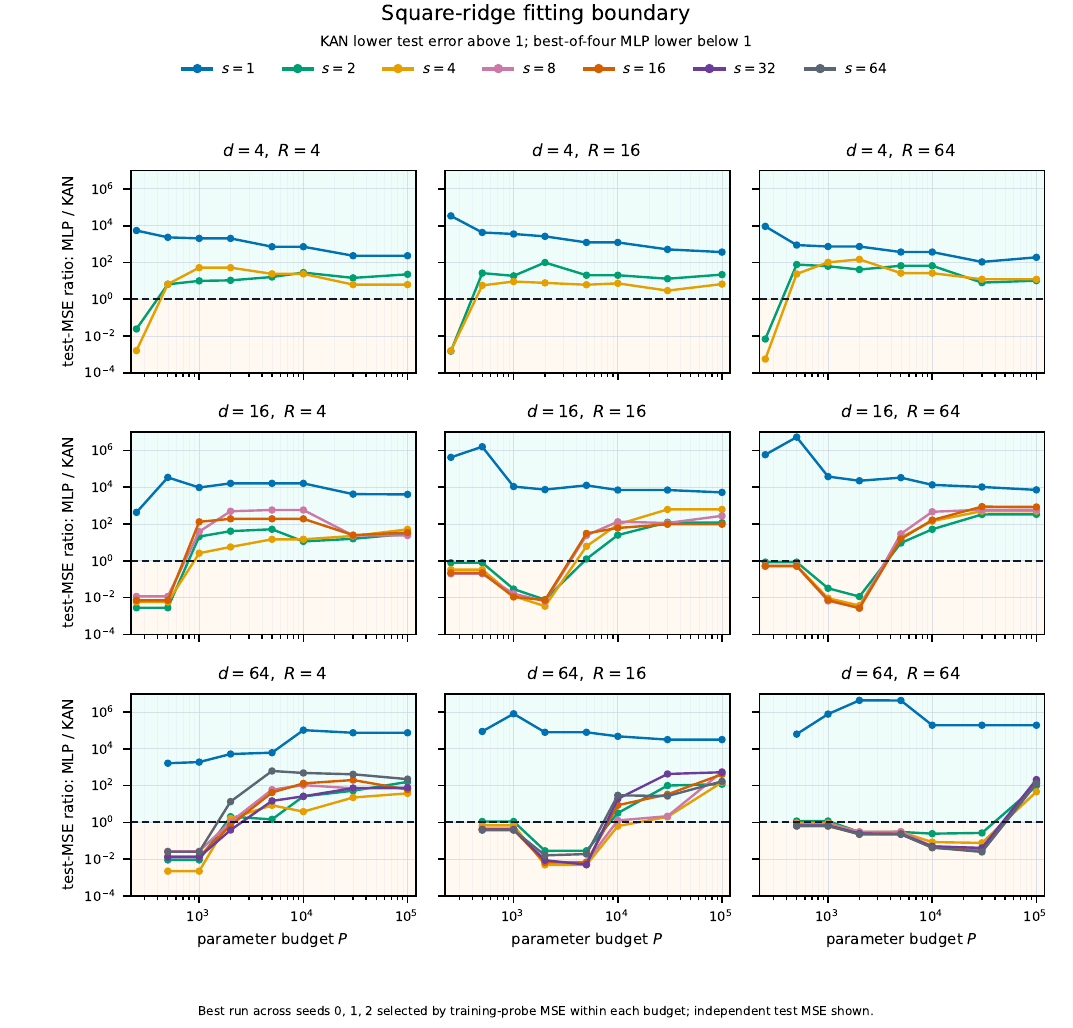}
  \caption{Complete square-ridge test-MSE ratios for
  $d,R\in\{4,16,64\}$, under the same training-probe selection rule as
  Figure~\ref{fig:ridge-square-boundary}. The middle row is shown in the
  main text.}
  \label{fig:ridge-square-full-appendix}
\end{figure*}

\begin{figure*}[p]
  \centering
  \includegraphics[width=0.98\textwidth]{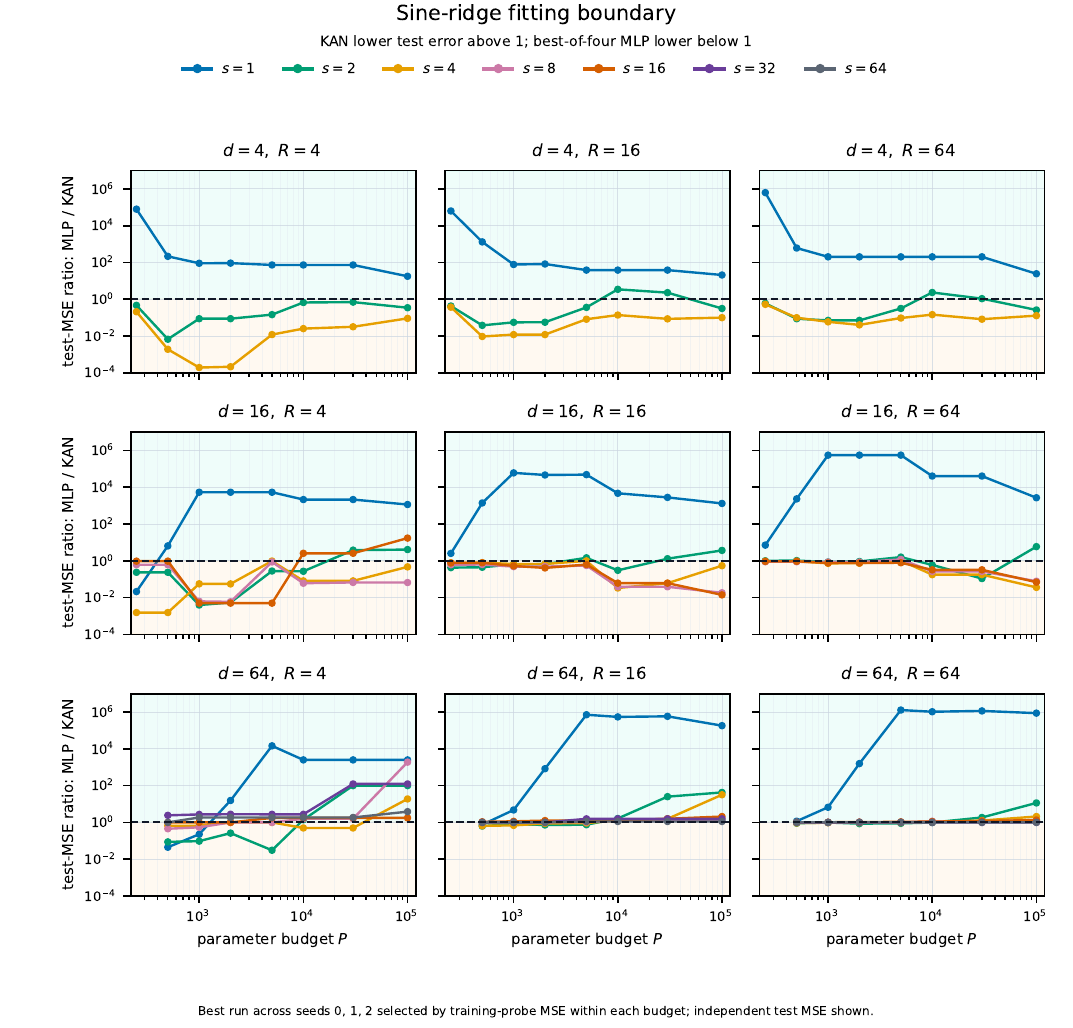}
  \caption{Sine-ridge test-MSE ratios under the same three-seed scan and
  parameter-budget selection as Figure~\ref{fig:ridge-square-boundary}.}
  \label{fig:ridge-sine-appendix}
\end{figure*}

\begin{figure*}[p]
  \centering
  \includegraphics[width=0.98\textwidth]{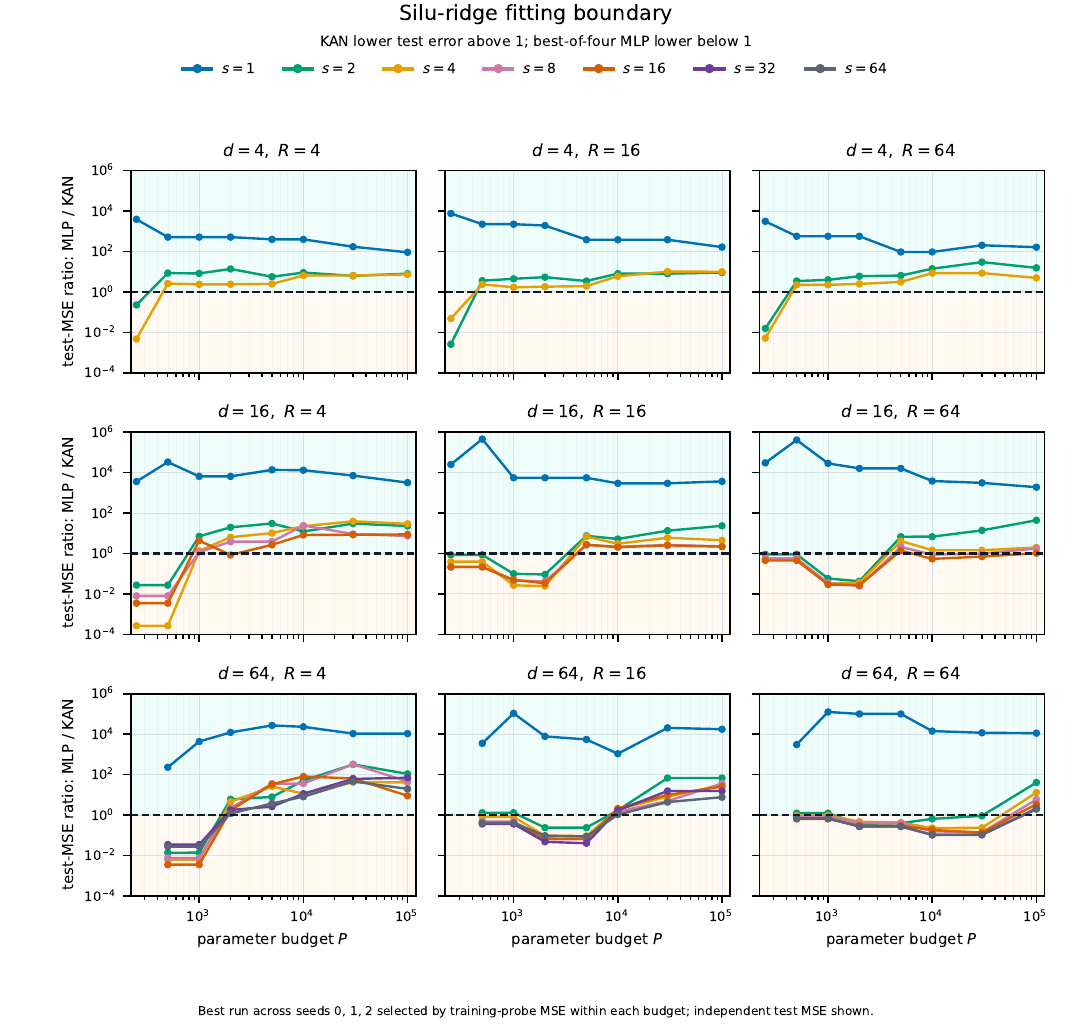}
  \caption{SiLU-ridge test-MSE ratios under the same scan and selection rule.}
  \label{fig:ridge-silu-appendix}
\end{figure*}

\begin{figure*}[p]
  \centering
  \includegraphics[width=0.98\textwidth]{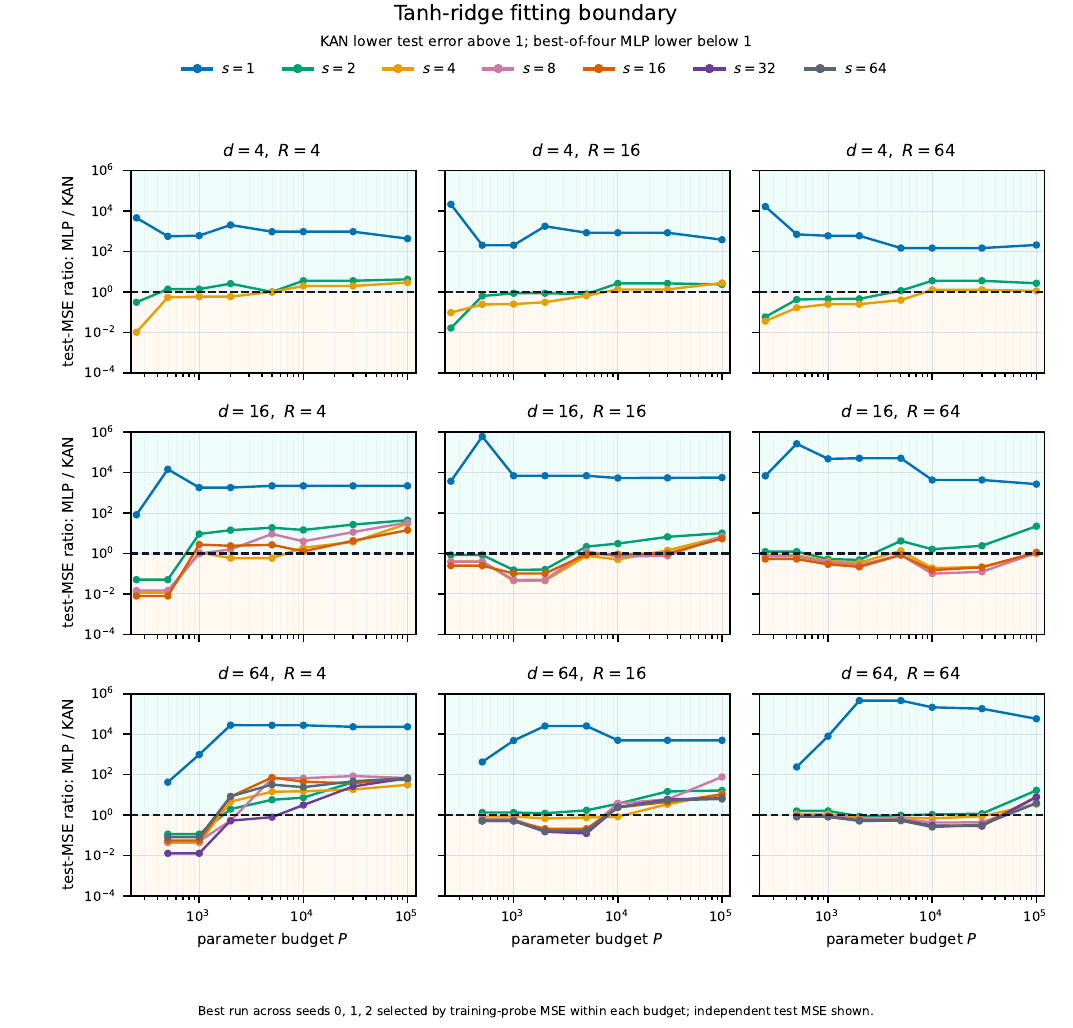}
  \caption{Tanh-ridge test-MSE ratios under the same scan and selection rule.}
  \label{fig:ridge-tanh-appendix}
\end{figure*}

\subsubsection{Exact representation and optimization diagnostic}
\label{app:sine-optimization-diagnostic}

For the sine profile in \eqref{eq:ridge-profile-teacher}, the target has the
exact one-hidden-layer sine-MLP representation
\begin{equation}
  f_{\sin}(\mathbf x)=\gamma_0+\sum_{r=1}^{R}\gamma_r
  \sin(\boldsymbol\omega_r^\top\mathbf x+\beta_r).
  \label{eq:sine-ridge-exact-mlp}
\end{equation}
For completeness, the coefficients in
\eqref{eq:sine-ridge-exact-mlp} can be chosen explicitly. Writing
$\mathbf 1$ for the all-ones vector and using the normalization in
\eqref{eq:ridge-profile-teacher}, set
\begin{equation}
  \gamma_0=-\frac{\mu_{\sin}}{\sigma_{\sin}},\qquad
  \gamma_r=\frac{a_r}{\sigma_{\sin}},\qquad
  \boldsymbol\omega_r=4\pi\boldsymbol v_r,\qquad
  \beta_r=2\pi\bigl(b_r-\boldsymbol v_r^\top\mathbf 1\bigr).
\end{equation}
This yields the target exactly. A dense one-hidden-layer implementation has
$dR$ input weights, $R$ hidden biases, $R$ output weights, and one output
bias, hence $R(d+2)+1$ trainable parameters.

We verify the distinction between representation and optimization at
$d=R=64$ using a width-$64$ sine MLP and the same Adam, warmup--cosine
schedule, batch size, and 5,000-step protocol as the main scan. The explicit
weights give test MSE $1.96\times10^{-13}$ without training. From the default
SIREN-style random initialization, the $s=1$ model finishes at test MSE
$0.542$. Supplying only the teacher ridge directions, while leaving phases and
output coefficients to be learned, reduces the test MSE to
$8.90\times10^{-6}$ for $s=1$, $4.42\times10^{-7}$ for $s=16$, and
$3.41\times10^{-7}$ for $s=64$. The high error persists when only the
first-layer weight norm is matched while its directions remain random.
Because direction-aligned initialization uses knowledge of the target, we
reserve these values for the optimization diagnostic. The exact construction
and initialization comparison identify optimization and feature discovery as
the source of the trained error.

\begin{figure*}[htbp]
  \centering
  \includegraphics[width=0.90\textwidth]{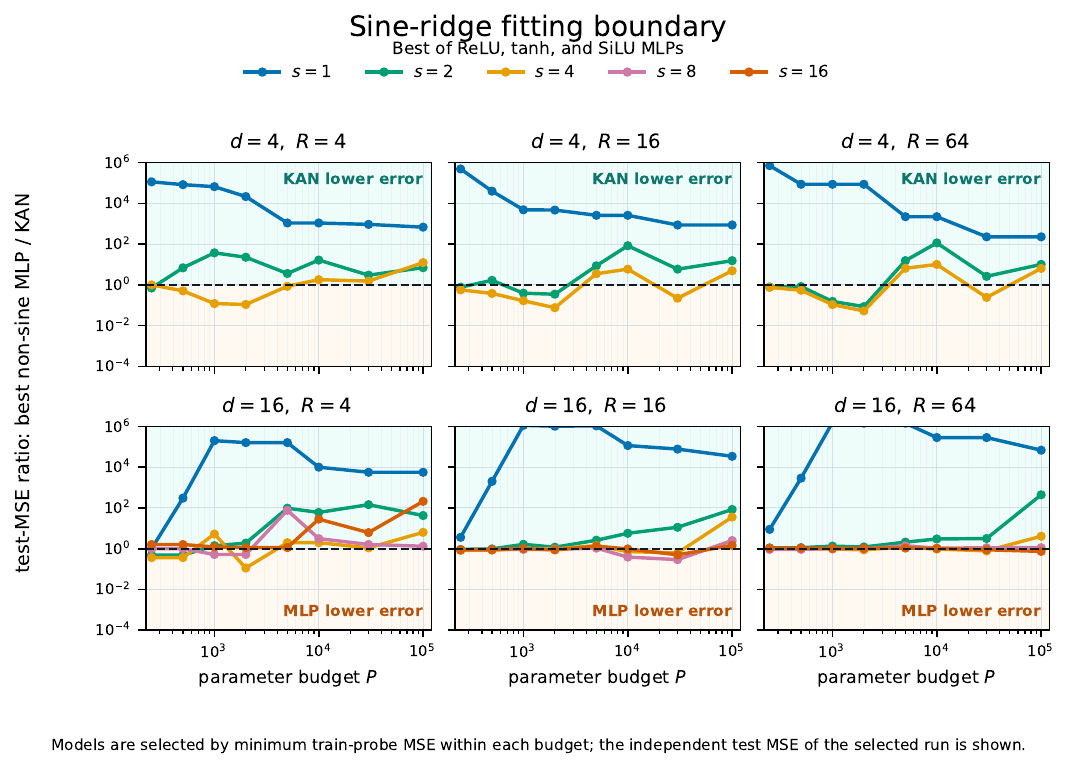}
  \caption{Sine-ridge test-error ratios when the MLP envelope is restricted to
  ReLU, tanh, or SiLU. Plotting and selection conventions follow
  Figure~\ref{fig:ridge-sine-appendix} for the $d\in\{4,16\}$ subset.
  Removing sine activation expands the KAN-favored region, while denser
  supports still narrow that advantage.}
  \label{fig:sine-ridge-nonsine}
\end{figure*}

\begin{figure*}[htbp]
  \centering
  \includegraphics[width=0.90\textwidth]{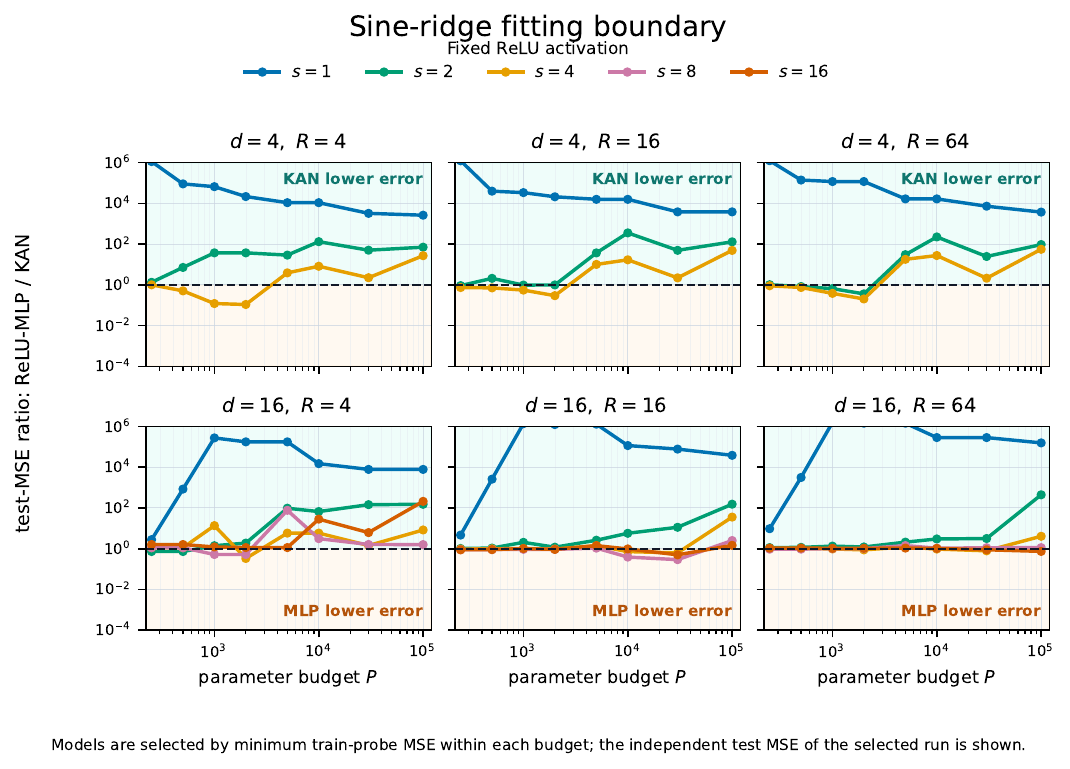}
  \caption{Sine-ridge test-error ratios for a fixed ReLU-MLP comparator.
  Increasing support $s$ again shifts the comparison toward MLP with a ReLU
  comparator.}
  \label{fig:sine-ridge-relu}
\end{figure*}

\begin{figure*}[htbp]
  \centering
  \includegraphics[width=0.90\textwidth]{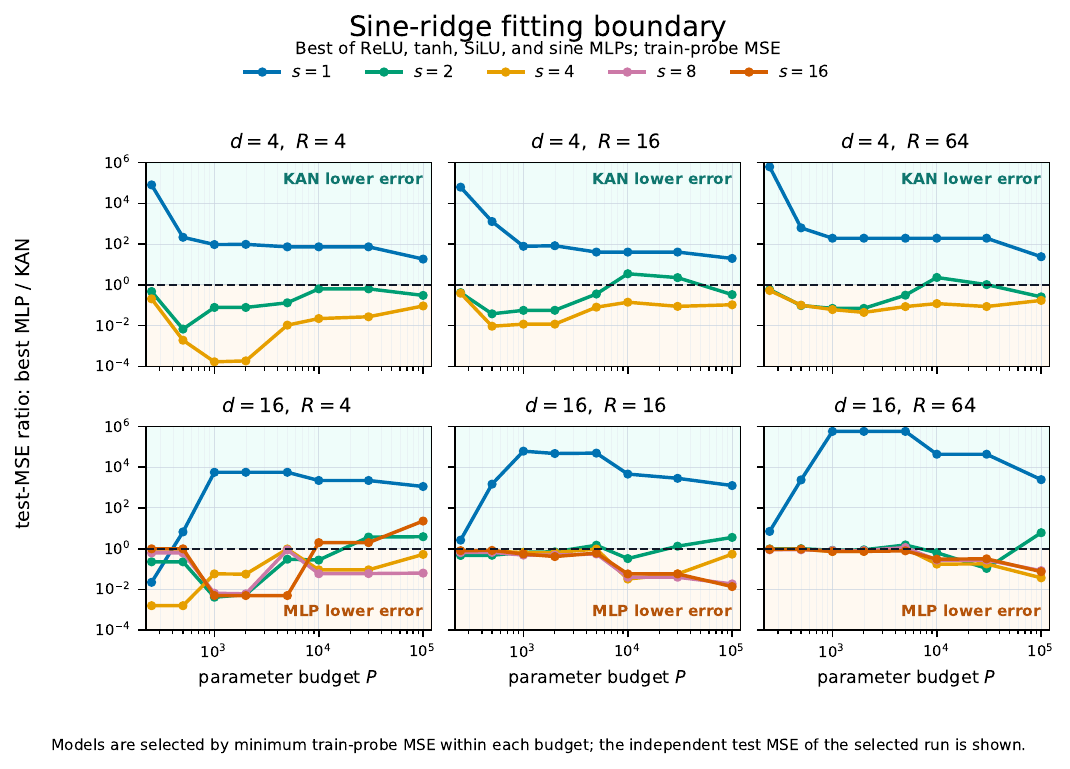}
  \caption{Training-probe MSE ratios for the best-activation MLP comparison.
  The panels cover $d\in\{4,16\}$ with the same selection rule as
  Figure~\ref{fig:ridge-sine-appendix}. The support-dependent pattern matches
  the independent-test ratios.}
  \label{fig:sine-ridge-train-probe}
\end{figure*}

Because a large error ratio can be misleading when both models fit poorly,
Figure~\ref{fig:sine-ridge-absolute-mse} reports the corresponding absolute
errors. The targets are standardized to unit variance, so the constant-zero
predictor has MSE approximately one. At moderate and large parameter budgets,
most selected models lie well below this baseline, and the training-probe and
independent-test curves closely track one another. Thus the ratios generally
do not compare two models stuck near the trivial predictor.

\begin{figure*}[htbp]
  \centering
  \includegraphics[width=0.98\textwidth]{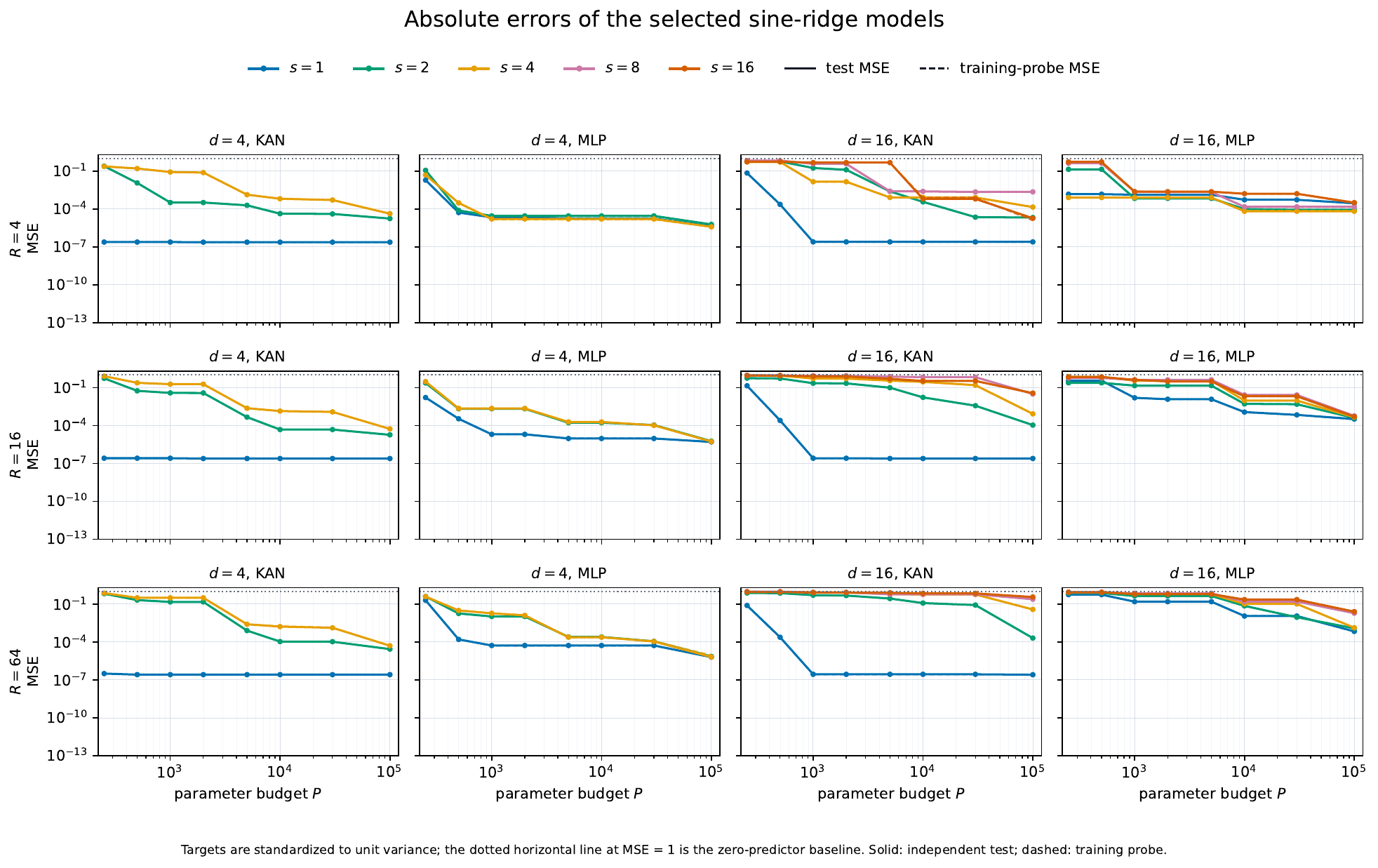}
  \caption{Absolute MSE of the models selected for the best-activation
  sine-ridge envelope. Solid curves report independent-test MSE and dashed
  curves report training-probe MSE. The dotted horizontal line marks MSE one,
  the zero-predictor baseline after target standardization.}
  \label{fig:sine-ridge-absolute-mse}
\end{figure*}

\clearpage
\subsection{MNIST classification and random-label test}
\label{app:mnist-experiment}

For each seed $q\in\{0,1,2\}$, all 60,000 images in the official MNIST
training partition are flattened to 784 coordinates in $[0,1]$. In the
normal-label condition, the official labels are retained and evaluation uses
all 10,000 images in the official test partition. In the random-label
condition, each training image $i=1,\ldots,60{,}000$ receives an independent
label $y_i$ sampled uniformly from $\{0,\ldots,9\}$. Within each condition and seed,
both model families receive the same images and labels.

\begin{table}[ht]
\caption{MNIST classification and random-label fitting hyperparameters.}
\label{tab:mnist-hparams}
\centering
\small
\begin{tabular}{@{}>{\raggedright\arraybackslash}p{0.24\linewidth}>{\raggedright\arraybackslash}p{0.70\linewidth}@{}}
\toprule
Item & Setting \\
\midrule
Hidden depth & $L\in\{2,4,6,8\}$ \\
Hidden width & MLP: $w\in\{4,8,16,32,64,128,256,512\}$; KAN:
$w\in\{1,2,4,8,16,32\}$ \\
MLP & ReLU activation; Kaiming-uniform weights; zero biases \\
Spline-only KAN & $G=3$, $k=3$; zero edge-wise base branch; spline
coefficients trainable and spline scale fixed \\
Optimization & Cross-entropy, batch size 256, 200 epochs, no weight decay;
SGD with momentum $0.9$ and learning rate in $\{0.1,0.03,0.01\}$, or Adam
with learning rate in $\{0.01,0.003,0.001\}$ \\
Aggregation & For each architecture and seed, retain the optimizer/rate with
highest final training accuracy; report medians and interquartile ranges over
three seeds \\
\bottomrule
\end{tabular}
\end{table}

For normal labels, the final training and test accuracies are read from the
optimizer/rate selected by final training accuracy, so the test frontier forms
a descriptive envelope over the prescribed scan. The independently randomized
label condition reports training accuracy as its measure of random-label fitting.

Table~\ref{tab:mnist-normal-random} in the main text summarizes selected
budgets. Figure~\ref{fig:mnist-normal-random-frontiers-appendix} shows the full
budget frontiers, while Figure~\ref{fig:mnist-architecture-heatmaps} exposes
the complete depth--width scan. Over the scanned budgets, the MLP test envelope on normal
labels rises faster and reaches a higher accuracy. With random labels, wider
MLPs approach 100\% training accuracy, while the scanned KANs remain below
20\%.

\begin{figure}[ht]
  \centering
  \includegraphics[width=0.88\linewidth]{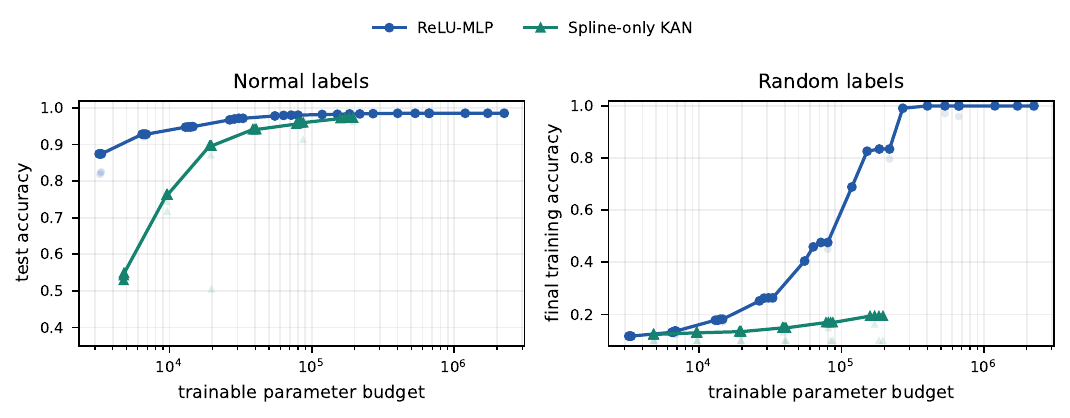}
  \caption{Full MNIST accuracy frontiers with $N_{\mathrm{train}}=60{,}000$.
  The left panel reports standard-label test accuracy and the right panel
  reports random-label final training accuracy. Each point is a three-seed
  architecture median after optimizer and learning-rate selection by final
  training accuracy. The curves show the best available value within each
  parameter budget.}
  \label{fig:mnist-normal-random-frontiers-appendix}
\end{figure}

\begin{figure}[ht]
  \centering
  \includegraphics[width=\linewidth]{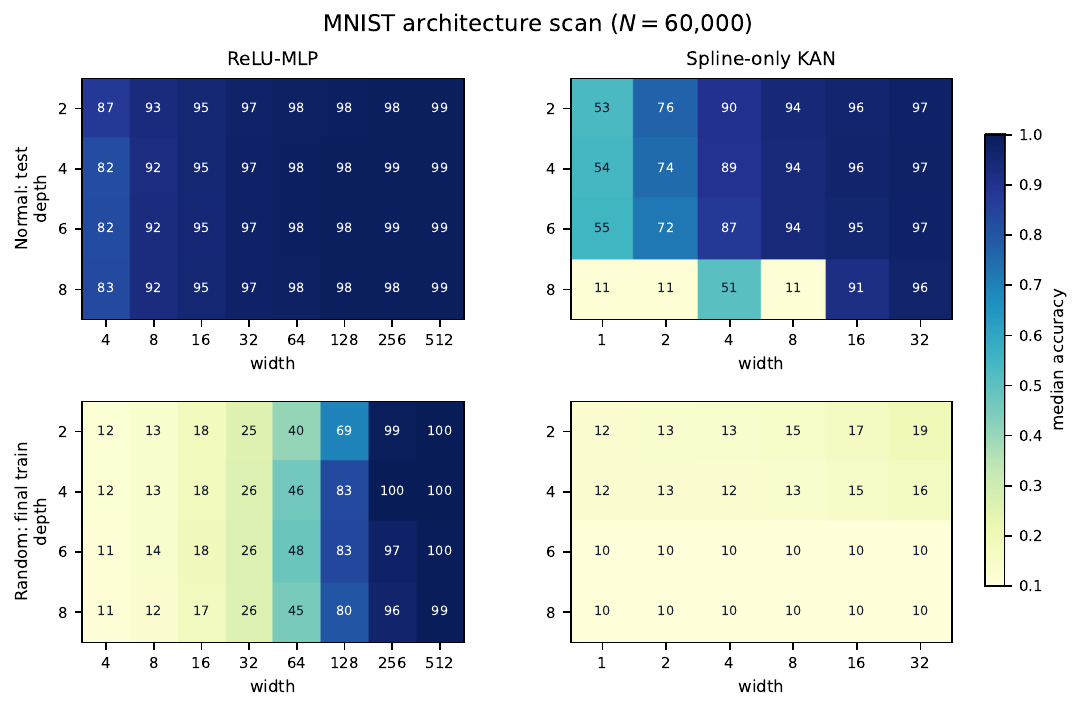}
  \caption{Depth--width scan for normal-label testing (top) and random-label
  fitting (bottom). Cell values are median accuracies in percent. Width
  grids differ between MLP and KAN so that the scan covers overlapping
  parameter budgets. The plot title's $N=60{,}000$ denotes the number of
  training examples. The random-label fitting gap persists across the
  tested depths.}
  \label{fig:mnist-architecture-heatmaps}
\end{figure}

As a small-sample diagnostic, we repeat both conditions using a fixed subset
of 6,000 training images. This scan also includes MLP widths $4$ and $8$, so
the two families overlap below $10^4$ trainable parameters.
Figure~\ref{fig:mnist-n6000-frontiers} shows the corresponding frontiers.
Fitting random labels is easier at this sample size, but the qualitative
separation persists: the MLP eventually fits all training labels, whereas the
best spline-only KAN plateaus below 100\% training accuracy.

\begin{figure}[ht]
  \centering
  \includegraphics[width=\linewidth]{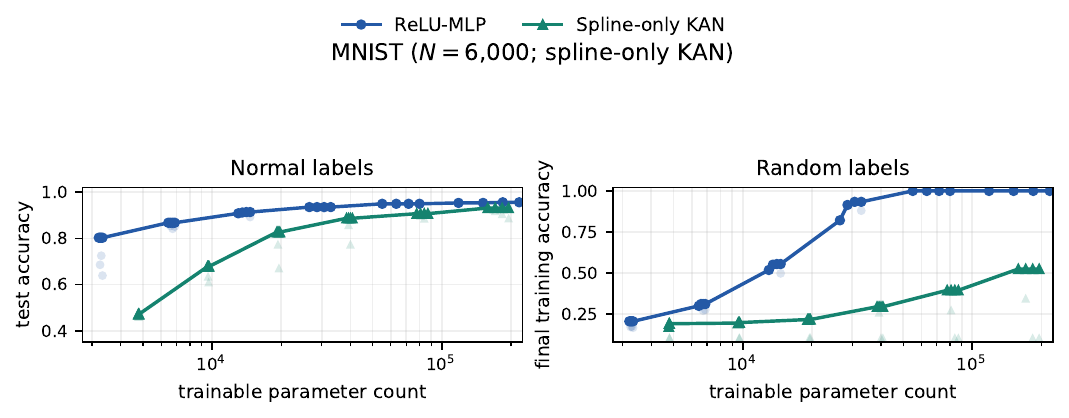}
  \caption{Small-sample MNIST comparison with $N_{\mathrm{train}}=6{,}000$.
  The normal-label
  panel reports median test accuracy and the random-label panel reports median
  training accuracy over three seeds. The display is truncated just beyond
  the largest scanned KAN budget so that the visible range is covered by both
  families.}
  \label{fig:mnist-n6000-frontiers}
\end{figure}

\end{document}